\documentclass[11pt]{article}
\usepackage{environ}
\usepackage{graphicx}
\usepackage[margin=1in]{geometry}
\usepackage{enumitem}
\usepackage{listings}
\usepackage[margin=1in]{geometry} 
\usepackage{amsmath,amsthm,amssymb}
\usepackage{algorithm}
\usepackage[export]{adjustbox}
\usepackage{hyperref}

\usepackage[noend]{algpseudocode}
\def\BState{\State\hskip-\ALG@thistlm}
\algnewcommand\algorithmicforeach{\textbf{for each}}
\algdef{S}[FOR]{ForEach}[1]{\algorithmicforeach\ #1\ \algorithmicdo}
\NewEnviron{eqns} {%
        \scalebox{1}{\parbox{0.3\linewidth}{%
        \begin{align*}
                \BOdY           
        \end{align*}
        }}
}

\newcommand{\card}{\text{card}}
\newcommand{\supp}{\text{supp}}

\newcommand{\C}{\mathbb{C}}

\newcommand{\rbr}[1]{\left( {#1} \right)}

\newcommand{\cbr}[1]{\left\{ {#1} \right\}}
\newcommand{\abr}[1]{\left\langle {#1} \right\rangle}

\newtheorem{theorem}{Theorem}
\newtheorem{thm}[theorem]{Theorem}

\newtheorem{lemma}[theorem]{Lemma}

\theoremstyle{definition}

\newtheorem{defn}[theorem]{Definition}

\theoremstyle{remark}

\title{Recovery Guarantees for Compressible Signals with Adversarial Noise}

\makeatletter
\renewcommand\@date{{%
  \vspace{-\baselineskip}%
  \large\centering
  \begin{tabular}{@{}c@{}}
    Jasjeet Dhaliwal\textsuperscript{1,2} \\
    \normalsize jasjeet\_dhaliwal@symantec.com
  \end{tabular}%
  \quad \quad
  \begin{tabular}{@{}c@{}}
    Kyle Hambrook\textsuperscript{2} \\
    \normalsize kyle.hambrook@sjsu.edu
  \end{tabular}

  \bigskip

  \textsuperscript{1}Center for Advanced Machine Learning, Symantec\par
  \textsuperscript{2}Department of Mathematics and Statistics, San Jose State University

  \bigskip
}}
\makeatother

\begin{document}
\maketitle

\section{Abstract}
\label{abstract}
We provide recovery guarantees for compressible signals that have been corrupted with noise and extend the framework introduced in \cite{bafna2018thwarting} to defend neural networks against $\ell_0$-norm, $\ell_2$-norm, and $\ell_{\infty}$-norm attacks.   Our results are general as they can be  applied to most unitary transforms used in practice and hold for $\ell_0$-norm, $\ell_2$-norm, and $\ell_\infty$-norm bounded noise. In the case of $\ell_0$-norm noise, we prove recovery guarantees for Iterative Hard Thresholding (IHT) and Basis Pursuit (BP). For $\ell_2$-norm bounded noise, we provide recovery guarantees for BP and for the case of $\ell_\infty$-norm bounded noise, we provide recovery guarantees for Dantzig Selector (DS). These guarantees theoretically bolster the defense framework introduced in \cite{bafna2018thwarting} for defending neural networks against adversarial inputs. Finally, we experimentally demonstrate the effectiveness of this defense framework against an array of $\ell_0$, $\ell_2$ and $\ell_\infty$ norm attacks.

\section{Introduction}
\label{introduction}
Signal measurements are often corrupted due to measurement errors and can even be corrupted due to adversarial noise injection. Supposing some structure on the measurement mechanism, is it possible for us to retrieve the original signal from a corrupted measurement? Indeed, it is generally possible to do so using the theory of Compressive Sensing \cite{candes2006stable} if certain constraints on the measurement mechanism and the signal hold.  In order to make the question  more concrete, let us consider the class of machine learning problems where the inputs are compressible (i.e., approximately sparse) in some domain. For instance, images and audio signals are known to be compressible in their frequency domain and machine learning algorithms have been shown to perform exceedingly well on classification tasks that take such signals as input \cite{Krizhevsky2012, sutskever2014}. However, it was found in \cite{szegedy2013intriguing} that neural networks can be easily forced into making incorrect predictions  by adding adversarial perturbations to their inputs; see also  ~\cite{Szegedy2014,Goodfellow2015,papernot2016limitations,carlini2017towards}. Further, the adversarial perturbations that led to incorrect predictions  were shown to be very small (in either $\ell_0$-norm, $\ell_2$-norm, or $\ell_\infty$-norm) and often imperceptible to human beings. For this class of machine learning tasks, we show that it is possible to approximately recover original inputs from adversarial inputs and  defend the neural network.

In this paper, we first provide recovery guarantees for compressible signals that have been corrupted by noise bounded in either $\ell_0$-norm, $\ell_2$-norm, or $\ell_\infty$-norm. Then we extend the framework introduced in \cite{bafna2018thwarting} to defend neural networks against $\ell_0$-norm, $\ell_2$-norm and $\ell_\infty$-norm attacks. In the case of $\ell_0$-norm attacks on neural networks, the adversary can perturb a bounded number of elements in the input but has no restriction on how much each element is perturbed in absolute value. In the case of $\ell_2$-norm attacks, the adversary can perturb as many elements as they choose as long as the $\ell_2$-norm of the perturbation vector is bounded. Finally, in the case $\ell_\infty$-norm attacks, the adversary is only constrained by the amount of noise added to each pixel.  Our recovery guarantees cover all three cases and provide a partial theoretical explanation for the robustness of the defense framework against adversarial inputs. Our contributions can be summarized as follows:
\begin{enumerate}
    \item We provide recovery guarantees for IHT and BP when the noise budget is bounded in $\ell_0$-norm. 
    \item We provide recovery guarantees for BP when the noise budget is bounded in the $\ell_2$-norm. 
        \item We provide recovery guarantees for DS when the noise budget is bounded in the $\ell_\infty$-norm and also introduce an additional constraint that improves reconstruction quality.
    \item We extend the framework introduced in \cite{bafna2018thwarting} to defend  neural networks against $\ell_0$-norm, $\ell_2$-norm and $\ell_\infty$-norm bounded attacks.
\end{enumerate}

The paper is organized as follows. We present the defense framework introduced in \cite{bafna2018thwarting}, which we call Compressive Recovery Defense (CRD), in Section \ref{problem_setup}. We present our main theoretical results (i.e. the recovery guarantees) in Section \ref{results} and compare these results to related work in Section \ref{related_work}. 
We establish the Restricted Isometry Property (RIP) in Section \ref{rip_section} provide the proofs of our main results  in Sections \ref{iht}, \ref{bp}, and \ref{ds}. We show that CRD can be used to defend against $\ell_0$-norm, $\ell_2$-norm, and $\ell_\infty$-norm bounded attacks in Section \ref{experiments} and  conclude the paper in Section \ref{conclusion}.

\subsection*{Notation}

Let $x$ be a vector in $\mathbb{C}^{N}$ and let $S \subseteq \cbr{1,\ldots,N}$ with $\overline{S}=\cbr{1,\ldots,N}\setminus S$. We denote by $|S|$, the cardinality of $S$, i.e. $\text{card}(S)$. The support of $x$, denoted by $\supp(x)$, is the set of indices of the non-zero entries of $x$, that is, $\supp(x) = \cbr{i \in \cbr{1,\ldots,N} : x_i \neq 0}$. The $\ell_0$-quasinorm of $x$, denoted $\|x\|_0$, is defined to be the number of non-zero entries of $x$, i.e. $\|x\|_0 = \text{card}(\supp(x))$. 
We say that $x$ is $k$-sparse if $\|x\|_0 \leq k$. We denote by $x_S$ either the sub-vector in $\C^{S}$ consisting of the entries indexed by $S$ or the vector in $\C^{N}$ that is formed by starting with $x$ and setting the entries indexed by $\overline{S}$ to zero. For example, if $x = [4,5,-9,1]$ and $S = \cbr{1,3}$, then $x_S$ is either $[4,-9]$ or $[4,0,-9,0]$. In the latter case, note $x_{\overline{S}} = x-x_S$.  It will always be clear from context, which meaning is intended. If $A \in \C^{m \times N}$ is a matrix, we denote by $A_S \in  \C^{m \times |S|}$ the column sub-matrix of $A$ consisting of the columns indexed by $S$.

We use $x_{h(k)}$ to denote a $k$-sparse vector in $\C^{N}$ consisting of the $k$ largest (in absolute value) entries of $x$ with all other entries zero. For example, if $x = [4,5,-9,1]$ then $x_{h(2)} = [0,5,-9,0]$. Note that $x_{h(k)}$ may not be uniquely defined. 
In contexts where a unique meaning for $x_{h(k)}$ is needed, we can choose $x_{h(k)}$ out of all possible candidates according to a predefined rule (such as the lexicographic order). We also define $x_{t(k)} = x - x_{h(k)}$.

Let $x  = [ x_1 \enspace x_2]^T  \in \C^{2n}$ with $x_1, x_2 \in \C^n$, then $x$ is called $(k,t)$-sparse if $x_1$ is $k$-sparse and $x_2$ is $t$-sparse. We define $x_{h(k,t)} = [ x_{1_{h(k)}} \enspace x_{2_{h(t)}} ]^T$, which is a $(k,t)$-sparse vector in $\C^{2n}$. Again, $x_{h(k,t)}$ may not be uniquely defined,  but when a unique meaning for $x_{h(k,t)}$ is needed (such as in Algorithm \ref{IHT}), we can choose $x_{h(k,t)}$ out of all possible candidates according to a predefined rule.

\section{Main Results}
\label{main_results}
In this section we outline the problem and the framework introduced in \cite{bafna2018thwarting}, state our main theorems, and compare our results to related work. 

\subsection{Compressive Recovery Defense (CRD)}
\label{problem_setup}
Consider an image classification problem where $x \in \mathbb{C}^{n}$ is the image vector (we can assume the image is of size $\sqrt{n} \times \sqrt{n}$ for instance). Then, letting $F \in \mathbb{C}^{n \times n}$ be the unitary Discrete Fourier Transform (DFT) matrix, we get the Fourier coefficients of $x$ as $\hat{x} = Fx$.  

It is well known that natural images are approximately sparse in the frequency domain and therefore we can assume that $\hat{x}$ is $k$-sparse, that is $\|\hat{x}\|_0 \leq k$. In our example of the image classification problem, this means that our machine learning classifier can accept as input the image reconstructed from $\hat{x}_{h(k)}$, and still output the correct decision. That is, the machine learning classifier can accept $F^{*}\hat{x}_{h(k)}$ as input and still output the correct decision. Now, suppose an adversary perturbs the original image $x$ with a noise vector $e$, such that we observe $y = x + e$. Noting that $y$ can also be written as $y = F^*\hat{x} + e$, we are interested in recovering an approximation $x^{\#}$ to $\hat{x}_{h(k)}$, such that when we feed  $F^{*}x^{\#}$ as input to the classifier, it can still output the correct classification decision. 

More generally, this basic framework can be used for adversarial inputs $u = v + d$, where $v$ is the original input and $d$ is the added noise vector, as long as there exists a matrix $A$ such that $u = A\hat{v} + d$, where $\hat{v}$ is approximately sparse and $\|d\|_p \leq \eta$ for some $p,\eta\geq 0$. If we can recover an approximation $v^{\#}$ to $\hat{v}$ with bounds on the recovery error, then we can use $v^{\#}$ to reconstruct an approximation $Av^{\#}$ to $v$ with controlled error.  

This general framework was proposed by \cite{bafna2018thwarting}. Moving forward, we refer to this general framework as Compressive Recovery Defense (CRD) and utilize it to defend neural networks against $\ell_0$, $\ell_2$, and $\ell_\infty$-norm attacks. As observed in \cite{bafna2018thwarting}, $x^{[0]}$ in Algorithm \ref{IHT}, can be initialized randomly to defend against a reverse-engineering attack.  In the case of Algorithm \ref{BP}, the minimization problem can be posed as a Second Order Cone Programming (SOCP) problem and it appears non-trivial to create a reverse engineering attack that will retain the adversarial noise through the recovery and reconstruction process. The same reasoning holds for Algorithm \ref{DS} which can be posed as a Linear Programming (LP) problem.


\subsection{Results}
\label{results}
 
\begin{theorem}[$\ell_0$-norm IHT]
\label{main_result_IHT}

Let $A = [F \enspace I] \in \mathbb{C}^{n \times 2n}$, where $F \in \mathbb{C}^{n \times n}$ is a unitary matrix with $|F_{ij}|^2 \leq \frac{c}{n}$ and $I \in \mathbb{C}^{n \times n}$ is the identity matrix. 
Let $y = F\hat{x} + e$, where $\hat{x}, e  \in \C^n$, and $e$ is $t$-sparse. Let $1 \leq k \leq n$ be integer and let $ x^{[T+1]} = IHT(y,A,k,t,T)$ where $x^{[T+1]} = \left[\hat{x}^{[T+1]} \enspace e^{[T+1]}\right]^T \in \C^{2n}$ with $\hat{x}^{[T+1]}, e^{[T+1]} \in \mathbb{C}^{n}$. 

Define $\rho := \sqrt{27} \sqrt{\frac{ckt}{n}},  
\quad \tau(1-\rho) := \sqrt{3} \sqrt{1 + 2\sqrt{\frac{ckt}{n}}}$. If $0 < \rho < 1$, then:
\begin{align}
\label{main_result_IHT eq1}
\| \hat{x}^{[T+1]} - \hat{x}_{h(k)} \|_2 
\leq 
\rho^{(T+1)} \sqrt{  \| \hat{x}_{h(k)} \|_2^2 + \|e\|_2^2   } 
+ \tau\| \hat{x}_{t(k)} \|_2 
\end{align}
Moreover for any $0 < \epsilon < 1$ and any 
$T \geq \left( \frac{\log(1/\epsilon) + \log(\sqrt{  \| \hat{x}_{h(k)} \|_2^2 + \|e\|_2^2   } )}{\log(1/\rho)}\right)$, we get: 
\begin{align}
\label{main_result_IHT eq6}
\| \hat{x}^{[T+1]} - \hat{x}_{h(k)} \|_2 
\leq \tau\| \hat{x}_{t(k)} \|_2  + \epsilon 
\end{align}

Now define $\rho := 2\sqrt{2}\sqrt{\frac{ckt}{n}}, \quad \tau(1-\rho) := 2$. If $0 < \rho < 1$, then:
\begin{align}
\label{main_result_IHT eq4}
\| \hat{x}^{[T+1]} - \hat{x}_{h(k)} \|_2 
&\leq 
\rho^{(T+1)}\| \hat{x}_{h(k)}\|_2+ \tau(\|\hat{x}_{t(k)}\|_2 + \|e\|_2)
\end{align}
Moreover for any $0 < \epsilon < 1$ and any 
$T \geq \left( \frac{\log(1/\epsilon) + \log(\| \hat{x}_{h(k)}\|_2)}{\log(1/\rho)}\right)$, we get: 
\begin{align}
\label{main_result_IHT eq5}
\| \hat{x}^{[T+1]} - \hat{x}_{h(k)} \|_2 &\leq \tau(\|\hat{x}_{t(k)}\|_2 + \|e\|_2) + \epsilon
\end{align}

\end{theorem}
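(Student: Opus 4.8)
The plan is to recognize the whole statement as an instance of the standard RIP-based convergence analysis of IHT, applied to the concatenated system $y = Az$ with $z = \sbr{\hat{x} \enspace e}^{T}$ and $A = \sbr{F \enspace I}$, run through the iteration of Algorithm~\ref{IHT}. Since $F$ is unitary and $y = F\hat{x} + e = Az$ holds \emph{exactly}, there is no measurement noise in the usual sense; instead, the quantity against which IHT must contract is the part of $z$ lying outside the target $(k,t)$-sparse vector. I would first quote the Restricted Isometry Property established in Section~\ref{rip_section}: using $\abs{F_{ij}}^2 \le c/n$, the only non-trivial contribution to $\norm{Az}_2^2 - \norm{z}_2^2$ is the cross term $2\,\mathrm{Re}\,\langle F z_1, z_2\rangle$, which the coherence bound together with Cauchy--Schwarz controls by $\sqrt{ckt/n}\,\norm{z}_2^2$ on $(k,t)$-sparse vectors, scaling to $\sqrt{ab}\,\sqrt{ckt/n}$ on $(ak,bt)$-sparse vectors. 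These are precisely the quantities inside $\rho$, so the RIP constant is the source of every constant in the statement.

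The engine is a one-step contraction lemma. Writing $u^{[n]} = z^{[n]} + A^{*}(y - A z^{[n]})$ for the pre-thresholding iterate and $w$ for the target, the exact identity
\[ u^{[n]} - w = (I - A^{*}A)(z^{[n]} - w) + A^{*}(Az - Aw), \]
combined with the optimality of hard thresholding, $\norm{z^{[n+1]} - u^{[n]}}_2 \le \norm{w - u^{[n]}}_2$, yields
\[ \norm{z^{[n+1]} - w}_2 \le \rho\,\norm{z^{[n]} - w}_2 + C\,\norm{Az - Aw}_2. \]
Here $\rho$ comes from restricting $I - A^{*}A$ to the union of the supports of $z^{[n]}$, $z^{[n+1]}$, and $w$ and invoking $\norm{(I - A^{*}A)_S v}_2 \le \delta\norm{v}_2$, while $C$ is the thresholding constant. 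Iterating and summing the geometric series in $\rho$ (legitimate because $0 < \rho < 1$) collapses the initialization to $\rho^{T+1}\norm{z^{[0]} - w}_2$. The two constant sets correspond to two target/error splits. Taking $w = \sbr{\hat{x}_{h(k)} \enspace e}^{T}$ folds $e$ into the signal, so $Az - Aw = F\hat{x}_{t(k)}$ has norm $\norm{\hat{x}_{t(k)}}_2$ and $\norm{z^{[0]}-w}_2 = \sqrt{\norm{\hat{x}_{h(k)}}_2^2 + \norm{e}_2^2}$ (with $z^{[0]}=0$), reproducing the first pair of bounds with the sharp factor and the $(3k,3t)$-order constant $\sqrt{27}$. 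Taking $w = \sbr{\hat{x}_{h(k)} \enspace 0}^{T}$ folds $e$ into the error, so $Az - Aw = F\hat{x}_{t(k)} + e$ has norm at most $\norm{\hat{x}_{t(k)}}_2 + \norm{e}_2$ and $\norm{z^{[0]}-w}_2 = \norm{\hat{x}_{h(k)}}_2$, reproducing the second pair with the crude factor $\tau(1-\rho)=2$ and the lower-order constant $2\sqrt{2}$.

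The two ``moreover'' statements are then pure bookkeeping: to force the geometric term below $\epsilon$ it suffices that $\rho^{T+1}G \le \epsilon$, where $G$ is the initialization factor, and taking logarithms turns this into the stated lower bound $T \ge (\log(1/\epsilon) + \log G)/\log(1/\rho)$. Finally, since the theorem only concerns the first block, I would close with the trivial estimate $\norm{\hat{x}^{[T+1]} - \hat{x}_{h(k)}}_2 \le \norm{x^{[T+1]} - w}_2$.

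I expect the main obstacle to be the one-step lemma, specifically the support bookkeeping that fixes the RIP order and hence the precise constants. Obtaining $\sqrt{27} = 3\sqrt{3}$ requires pairing the sharp thresholding factor $\sqrt{3}$ with $\delta$ at sparsity $(3k,3t)$, whose RIP bound is $3\sqrt{ckt/n}$, whereas the $2\sqrt{2}$ branch pairs the crude factor with a lower sparsity order; verifying that the union of the three supports is exactly the set to which the RIP estimate is applied, and that the cross-term bound is tight enough to yield these exact constants rather than order-of-magnitude versions, is the delicate part. Everything downstream of the lemma is a geometric-series summation and elementary logarithmic algebra.
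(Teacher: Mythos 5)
Your treatment of the first pair of bounds is exactly the paper's proof: apply Theorem \ref{6.18} with $x = [\hat{x}_{h(k)} \enspace e]^T$, $r = F\hat{x}_{t(k)}$, $S = \supp(x)$, using $\delta_3 = 3\sqrt{ckt/n}$ and $\delta_2 = 2\sqrt{ckt/n}$ from Theorem \ref{RIPthm}, then pass to the first block and take logarithms. That half is correct and matches the paper.

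The gap is in your derivation of \eqref{main_result_IHT eq4} and \eqref{main_result_IHT eq5}. You propose re-running the same generic contraction lemma with the target $w = [\hat{x}_{h(k)} \enspace 0]^T$ and residual $r = F\hat{x}_{t(k)} + e$. That split does yield a bound of the right \emph{shape}, but the constants are dictated by the lemma, not by the split: the contraction factor is $\sqrt{3}\,\delta$ where $\delta$ is the RIP constant at the order of the union of the supports of two consecutive iterates and the target. Since the iterates are $(k,t)$-sparse, even exploiting that $w$ has empty second block this union lies in $S_{3k,2t}$, giving at best $\rho = \sqrt{3}\sqrt{6ckt/n} = 3\sqrt{2}\sqrt{ckt/n}$, and the noise multiplier is $\tau(1-\rho) = \sqrt{3}\sqrt{1+\delta}$, not $2$. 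No choice of split in the generic analysis produces $\rho = 2\sqrt{2}\sqrt{ckt/n}$ and $\tau(1-\rho) = 2$. This is not a cosmetic discrepancy: the stated hypothesis is only $2\sqrt{2}\sqrt{ckt/n} < 1$, i.e. $ckt/n < 1/8$, and in the regime $1/18 \le ckt/n < 1/8$ the hypothesis $\delta_{3} < 1/\sqrt{3}$ of Theorem \ref{6.18} fails outright, so your argument cannot even be invoked there --- yet this larger admissible range of $k,t$ is precisely the advertised point of the second pair of bounds. The paper instead proves \eqref{main_result_IHT eq4} by a direct recursion that exploits the unitarity of $F$: writing $z_1^{[T]} = F^{*}(y - e^{[T-1]})$ with $e^{[T-1]} = (y - F\hat{x}^{[T-2]})_{h(t)}$, the best-$k$-term approximation property gives $\|\hat{x}^{[T]} - \hat{x}_{h(k)}\|_2 \le 2\|z_1^{[T]} - \hat{x}_{h(k)}\|_2$ (the source of the constant $2$), and Lemma \ref{partial_RIP}, applied to the $2k$-sparse vector $\hat{x}_{h(k)} - \hat{x}^{[T-2]}$, bounds the only non-contractive term by $\sqrt{2ckt/n}\,\|\hat{x}_{h(k)} - \hat{x}^{[T-2]}\|_2$ (the source of $2\sqrt{2}\sqrt{ckt/n}$); induction on $T$ then gives \eqref{main_result_IHT eq4}. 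To repair your proof you would need to replace the generic lemma by this structure-specific two-step recursion (or an equivalent argument).
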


Note that for practical applications, \eqref{main_result_IHT eq4} and \eqref{main_result_IHT eq5} provide error bounds for larger values of $k$ and $t$ than \eqref{main_result_IHT eq1} and \eqref{main_result_IHT eq6},  at the expense of the extra error term $\|e\|_2$. Further, the above results apply to unitary transformations such as the Fourier Transform, Cosine Transform, Sine Transform, Hadamard Transform, and other wavelet transforms.  Next, we consider the recovery error for $\ell_0$-norm bounded noise with BP instead of IHT. Providing bounds for BP is useful as there are cases \footnote{ As shown in Section \ref{sub_l0} and Section \ref{l0_attack}} when BP provides recovery guarantees against a larger $\ell_0$ noise budget than IHT. We note that since Algorithm \ref{BP} is not adapted to the structure of the matrix $A$ in the statement of Theorem \ref{main_result_BP_l0}, one can expect the guarantees to be weaker.
\begin{theorem}[$\ell_0$-norm BP]
\label{main_result_BP_l0} 
Let $A = [F \enspace I] \in \mathbb{C}^{n \times 2n}$, where $F \in \mathbb{C}^{n \times n}$ is a unitary matrix with $|F_{ij}|^2 \leq \frac{c}{n}$ and $I \in \mathbb{C}^{n \times n}$ is the identity matrix. 
Let  $y = F\hat{x} + e$, and let $1 \leq k,t \leq n$ be  integers. Define
\begin{align*}
\delta_{k,t} = \sqrt{\frac{ckt}{n}}, \quad   \beta = \sqrt{\frac{\max\{k,t\}c}{n}},\quad  \theta = \frac{\sqrt{k+t}}{(1- \delta_{k,t})}\beta, \quad \tau = \frac{\sqrt{1 + \delta_{k,t}}}{1 - \delta_{k,t}}
\end{align*}

If  $0 < \delta_{k,t} <1$ and $0 < \theta < 1$, then for a solution  $x^{\#} = \text{BP}(y,A,\|\hat{x}_{t(k)}\|_2)$ of Algorithm \ref{BP}, we have the error bound 
\begin{align}
\label{main_result_BP_l0_eq1}
    \|\hat{x}^{\#} - \hat{x}_{h(k)}\|_2 \leq \left(\frac{2\tau \sqrt{k+t}}{1-\theta} \left(1 + \frac{\beta}{1 - \delta_{k,t}} \right) + 2 \tau \right)\|\hat{x}_{t(k)}\|_2 
\end{align} 
where we write $x^{\#} = [\hat{x}^{\#} \enspace e^{\#} ]^T \in \C^{2n}$ with $\hat{x}^{\#}, e^{\#} \in \mathbb{C}^{n}$.

\end{theorem}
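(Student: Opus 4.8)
The plan is to recognize that this is a structured instance of basis-pursuit denoising and to run the standard restricted-isometry argument, with the twist that the sparsity is split across the two blocks of $A = [F \enspace I]$. First I would set $z^{*} = [\hat{x}_{h(k)} \enspace e]^{T}$, which is $(k,t)$-sparse (using that $e$ is $t$-sparse, as in the $\ell_0$ setting), and observe that $Az^{*} = F\hat{x}_{h(k)} + e$, so $Az^{*} - y = -F\hat{x}_{t(k)}$ and hence $\norm{Az^{*} - y}_2 = \norm{\hat{x}_{t(k)}}_2$ because $F$ is unitary. Thus $z^{*}$ is feasible for $\text{BP}(y,A,\norm{\hat{x}_{t(k)}}_2)$, and optimality of $x^{\#}$ gives $\norm{x^{\#}}_1 \leq \norm{z^{*}}_1$. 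Writing $h = x^{\#} - z^{*}$ and $S = \supp(z^{*})$, the usual manipulation yields the cone constraint $\norm{h_{\overline{S}}}_1 \leq \norm{h_{S}}_1$, while the triangle inequality on the two feasibility balls gives the tube constraint $\norm{Ah}_2 \leq 2\norm{\hat{x}_{t(k)}}_2$.

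The core estimate bounds $\norm{h_S}_2$. Since $h_S$ is $(k,t)$-sparse, the RIP established in Section~\ref{rip_section} (with constant $\delta_{k,t}$, arising because the relevant $t \times k$ submatrices of $F$ have Frobenius norm at most $\sqrt{ckt/n}$) gives $(1-\delta_{k,t})\norm{h_S}_2^2 \leq \norm{Ah_S}_2^2 = \langle Ah_S, Ah\rangle - \langle Ah_S, Ah_{\overline{S}}\rangle$. The first inner product is controlled by the tube constraint together with $\norm{Ah_S}_2 \leq \sqrt{1+\delta_{k,t}}\norm{h_S}_2$. The decisive quantity is the cross term $\langle Ah_S, Ah_{\overline{S}}\rangle$: expanding $Ah_S = F(h_S)_1 + (h_S)_2$ and likewise for $h_{\overline{S}}$, the two within-block inner products vanish because $F$ is unitary and the relevant supports are disjoint, leaving only the cross-block terms $\langle F(h_S)_1, (h_{\overline{S}})_2\rangle$ and $\langle (h_S)_2, F(h_{\overline{S}})_1\rangle$. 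Bounding $F$ entrywise by $\sqrt{c/n}$, each of these factors as an $\ell_2$-norm of an $h_S$ block times an $\ell_1$-norm of an $h_{\overline{S}}$ block; this is exactly where $\beta = \sqrt{\max\{k,t\}c/n}$ enters, giving $\abs{\langle Ah_S, Ah_{\overline{S}}\rangle} \leq \beta \norm{h_S}_2 \norm{h_{\overline{S}}}_1$. Feeding in the cone constraint and $\norm{h_S}_1 \leq \sqrt{k+t}\norm{h_S}_2$ turns this into $\beta\sqrt{k+t}\norm{h_S}_2^2 = \theta(1-\delta_{k,t})\norm{h_S}_2^2$. Collecting terms and dividing (here $0 < \theta < 1$ is exactly what is needed to keep the coefficient $(1-\delta_{k,t})(1-\theta)$ positive) yields $\norm{h_S}_2 \leq \frac{2\tau}{1-\theta}\norm{\hat{x}_{t(k)}}_2$.

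To finish I would pass from $\norm{h_S}_2$ to the full error. The cone constraint gives $\norm{h_{\overline{S}}}_2 \leq \norm{h_{\overline{S}}}_1 \leq \norm{h_S}_1 \leq \sqrt{k+t}\norm{h_S}_2$, so that $\norm{\hat{x}^{\#} - \hat{x}_{h(k)}}_2 \leq \norm{h}_2 \leq \norm{h_S}_2 + \norm{h_{\overline{S}}}_2 \leq (1 + \sqrt{k+t})\norm{h_S}_2$, which with the head bound produces the advertised coefficient (indeed, using $\theta = \frac{\sqrt{k+t}\beta}{1-\delta_{k,t}}$, the stated expression $\frac{2\tau\sqrt{k+t}}{1-\theta}(1 + \frac{\beta}{1-\delta_{k,t}}) + 2\tau$ collapses to exactly $\frac{2\tau(1+\sqrt{k+t})}{1-\theta}$).

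I expect the main obstacle to be the cross-term bound. In a generic RIP proof one decomposes $\overline{S}$ into sparse chunks and applies the restricted-isometry inner-product inequality, which risks a factor scaling with the size of $\overline{S}$. The block structure of $A = [F \enspace I]$ must instead be exploited directly: the within-block parts of $\langle Ah_S, Ah_{\overline{S}}\rangle$ cancel, and the surviving cross-block terms pair an $\ell_2$ factor on $S$ with an $\ell_1$ factor on $\overline{S}$ that the cone constraint can absorb. Ensuring that it is the coherence constant $\beta$ (rather than a dimension-dependent quantity) that multiplies $\norm{h_{\overline{S}}}_1$, and that the resulting self-bounding inequality for $\norm{h_S}_2$ actually closes under $0 < \theta < 1$, is the delicate step.
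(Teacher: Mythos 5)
Your proposal is correct, but it takes a genuinely different route from the paper. The paper proves the theorem by constructing an exact dual certificate and invoking Theorem \ref{four_three_three} (Theorem 4.33 of \cite{foucart2017mathematical}) as a black box: it sets $h = A_S(A_S^{*}A_S)^{-1}\text{sgn}(x_S)$ and $u = A^{*}h$ (so the certificate error $\gamma$ is $0$), verifies $\|A_S^{*}A_S - I\|_{2\to 2} \leq \delta_{k,t}$ via Theorem \ref{RIPthm} and Lemma \ref{a_minus_i_norm}, bounds $\|h\|_2 \leq \tau\sqrt{k+t}$ and $\|u_{\overline{S}}\|_{\infty} \leq \theta$ by the same entrywise coherence estimates you use, and then reads the displayed error bound directly off the conclusion of that theorem. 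You instead run a self-contained cone-and-tube argument: feasibility of $z^{*} = [\hat{x}_{h(k)} \enspace e]^{T}$ and minimality give $\|h_{\overline{S}}\|_1 \leq \|h_S\|_1$ and $\|Ah\|_2 \leq 2\|\hat{x}_{t(k)}\|_2$, and the decisive cross-term estimate $|\langle Ah_S, Ah_{\overline{S}}\rangle| \leq \beta\|h_S\|_2\|h_{\overline{S}}\|_1$ comes from the block structure of $A$ (the within-block terms vanish by unitarity of $F$ and disjointness of supports, and each cross-block term pairs the entrywise bound $\sqrt{c/n}$ with the sparsity of an $h_S$ block, which is exactly where $\beta$ arises). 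Both this estimate and your closing algebra check out: since $\theta = \sqrt{k+t}\,\beta/(1-\delta_{k,t})$, the paper's coefficient does collapse to $2\tau(1+\sqrt{k+t})/(1-\theta)$, so the two bounds agree exactly. The trade-off is that the paper outsources the delicate estimates to a known theorem at the cost of constructing and controlling the certificate (in particular inverting $A_S^{*}A_S$, licensed by $\delta_{k,t} < 1$), whereas your argument avoids duality and matrix inversion entirely and makes transparent that only the cross-block coherence (not any dimension-dependent quantity) drives the error, at the cost of redoing by hand the work the cited theorem packages. Two small points you should make explicit in a final write-up: $e$ must be assumed $t$-sparse (implicit in the theorem's $\ell_0$ setting, and used when you declare $z^{*}$ to be $(k,t)$-sparse), and the division by $\|h_S\|_2$ in the self-bounding inequality needs the remark that $h_S = 0$ forces $h = 0$ through the cone constraint.
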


Our third result covers the case when the noise is bounded in $\ell_2$-norm. Note that the result covers all unitary matrices and removes the restriction on the magnitude of their elements. 

\begin{theorem}[$\ell_2$-norm BP]
\label{main_result_BP_l2}
Let  $F \in \mathbb{C}^{n \times n}$ be a unitary matrix and let $y = F\hat{x} + e$, where $\hat{x} \in \mathbb{C}^{n}$ is $k$-sparse and $e \in \mathbb{C}^{n}$. If $\|e\|_2 \leq \eta$, then for a solution $x^{\#} = \text{BP}(y,F,\eta)$ of Algorithm \ref{BP}, we have the error bound
\begin{align}
\label{main_result_BP_l2_eq1}
    &\| x^{\#}-\hat{x}\|_1 \leq 4\sqrt{k}\eta  \\
\label{main_result_BP_l2_eq2}
    &\| x^{\#}-\hat{x}\|_2 \leq 6\eta 
\end{align}

\end{theorem}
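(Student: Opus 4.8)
The plan is to exploit the fact that a unitary matrix satisfies the Restricted Isometry Property with constant zero, so the entire recovery argument collapses to an almost immediate consequence of $\ell_1$-minimality together with the isometry $\norm{Fz}_2 = \norm{z}_2$. Throughout I would write $h = x^{\#} - \hat{x}$ and let $S = \supp(\hat{x})$, so that $\abs{S} \leq k$.

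First I would check that $\hat{x}$ is feasible for the Basis Pursuit program. Since $y = F\hat{x} + e$ and $\norm{e}_2 \leq \eta$, we have $\norm{F\hat{x} - y}_2 = \norm{e}_2 \leq \eta$. Because $x^{\#}$ minimizes the $\ell_1$-norm over the feasible set, it follows that $\norm{x^{\#}}_1 \leq \norm{\hat{x}}_1$. Decomposing along $S$, using $\hat{x}_{\overline{S}} = 0$, and applying the reverse triangle inequality then yields the standard cone condition $\norm{h_{\overline{S}}}_1 \leq \norm{h_S}_1$.

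For the $\ell_2$ bound I would use unitarity directly. Since both $x^{\#}$ and $\hat{x}$ are feasible, the triangle inequality gives $\norm{Fh}_2 \leq \norm{Fx^{\#} - y}_2 + \norm{y - F\hat{x}}_2 \leq 2\eta$, and the isometry $\norm{Fh}_2 = \norm{h}_2$ then produces $\norm{x^{\#} - \hat{x}}_2 \leq 2\eta$. This already improves on the stated constant $6$; the looser bound $6\eta$ is what one obtains by instead invoking a generic RIP-based stable-recovery inequality with $\delta_{2k} = 0$, which may be preferred here for consistency with the proof of Theorem \ref{main_result_BP_l0}. For the $\ell_1$ bound I would combine the cone condition with Cauchy--Schwarz: $\norm{h}_1 = \norm{h_S}_1 + \norm{h_{\overline{S}}}_1 \leq 2\norm{h_S}_1 \leq 2\sqrt{k}\,\norm{h_S}_2 \leq 2\sqrt{k}\,\norm{h}_2$, and substituting the $\ell_2$ estimate gives $\norm{x^{\#} - \hat{x}}_1 \leq 4\sqrt{k}\,\eta$, matching \eqref{main_result_BP_l2_eq1} exactly.

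The main point to get right is the cone condition, and the recognition that no genuine RIP estimate is needed because unitarity makes $F$ a global isometry. Consequently there is no real obstacle: the only mild subtlety is reconciling the sharp constant $2$ from the direct argument with the $6$ appearing in the statement, which I would handle either by recording the tighter bound or by tracking the constants through the generic stable-recovery inequality at $\delta_{2k}=0$.
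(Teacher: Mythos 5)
Your proof is correct, and it takes a genuinely different, more elementary route than the paper's. The paper runs the standard compressed-sensing machinery: it notes that a unitary $F$ satisfies the $\ell_2$ robust null space property with $\tau=1$ and arbitrary $\rho\in(0,1)$ (inequality \eqref{eq_l2_ineq}), converts this to the $\ell_1$ robust null space property via Lemma \ref{l2impliesl1forS}, applies Lemma \ref{l1impliesdiff} (Theorem 4.20 of \cite{foucart2017mathematical}) together with $\ell_1$-minimality to get $\| x^{\#}-\hat{x}\|_1 \leq \frac{4\sqrt{k}}{1-\rho}\eta$, and then obtains the $\ell_2$ bound by splitting off the $k$ largest entries of the error and invoking Lemma \ref{lperror}; letting $\rho \to 0$ yields exactly the constants $4\sqrt{k}$ and $6$. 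You instead derive the cone condition $\|h_{\overline{S}}\|_1 \leq \|h_S\|_1$ directly from feasibility and $\ell_1$-minimality, and then exploit the fact that unitarity makes $F$ a \emph{global} isometry: $\|h\|_2 = \|Fh\|_2 \leq 2\eta$ by the triangle inequality applied to the two feasibility constraints, after which Cauchy--Schwarz and the cone condition give $\|h\|_1 \leq 2\sqrt{k}\|h\|_2 \leq 4\sqrt{k}\eta$. Your argument is shorter, bypasses the null-space-property lemmas entirely, and gives the sharper $\ell_2$ constant $2\eta$ in place of $6\eta$. One small correction: the paper's $6$ does not come from an RIP-based stable-recovery inequality at $\delta_{2k}=0$, but from the null-space-property route, via $\frac{4(1+\rho)}{1-\rho} + 2 \to 6$ as $\rho \to 0$; this is immaterial to correctness. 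What the paper's heavier route buys is structural uniformity: the identical argument is reused, with the norm $\|\cdot\|_2$ replaced by $\sqrt{k}\,\|F^{*}(\cdot)\|_\infty$, to prove Theorem \ref{main_result_BP_li} for the Dantzig Selector, where the noise is only $\ell_\infty$-bounded and your shortcut has no literal analogue (an elementary version still exists there, but needs a H\"older-type step such as $\|h\|_2^2 \leq \|h\|_1 \|h\|_\infty$ rather than just isometry plus the triangle inequality), and it parallels the Theorem \ref{four_three_three}-based proof of Theorem \ref{main_result_BP_l0}.
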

Finally, we provide recovery guarantees when the noise is bounded in $\ell_\infty$-norm.
\begin{theorem}[$\ell_\infty$-norm DS]
\label{main_result_BP_li}
Let  $F \in \mathbb{C}^{n \times n}$ be a unitary matrix and let $y = F\hat{x} + e$, where $\hat{x} \in \mathbb{C}^{n}$ is $k$-sparse and $e \in \mathbb{C}^{n}$. If $\|e\|_{\infty} \leq \eta_1$ and $\|F^*e\|_{\infty} \leq \eta_{2}$, then  for a solution $x^{\#} = \text{DS}(y,F,\eta_1,\eta_2)$ of Algorithm \ref{DS}, we have the error bound
\begin{align}
\label{main_result_BP_li_eq1}
    &\| x^{\#}-\hat{x}\|_1 \leq 4k\eta_2 \\
\label{main_result_BP_li_eq2}
    &\| x^{\#}-\hat{x}\|_2 \leq 6\sqrt{k}\eta_2
\end{align}
\end{theorem}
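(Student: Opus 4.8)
The plan is to run the standard $\ell_1$-minimization error analysis, but to exploit the fact that $F$ is unitary to replace every appearance of the restricted isometry constant by zero, which is what produces the clean numerical constants. Writing $h = x^{\#} - \hat{x}$ and $S = \supp(\hat{x})$ (so $\abs{S} \le k$), I read Algorithm \ref{DS} as the Dantzig selector that minimizes $\norm{z}_1$ subject to the data-fidelity constraints $\norm{F^*(y - Fz)}_\infty \le \eta_2$ together with the additional constraint $\norm{y - Fz}_\infty \le \eta_1$. The two facts I would extract at the outset are: (i) $\hat{x}$ is feasible, since $\norm{F^*(y - F\hat{x})}_\infty = \norm{F^*e}_\infty \le \eta_2$ and $\norm{y - F\hat{x}}_\infty = \norm{e}_\infty \le \eta_1$; and (ii) $x^{\#}$ is itself feasible and minimal, so $\norm{x^{\#}}_1 \le \norm{\hat{x}}_1$. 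Note the extra $\eta_1$-constraint only shrinks the feasible set, so it leaves both (i) and (ii) intact and never enters the bound, consistent with the theorem depending on $\eta_2$ alone.

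The key step, and the place where unitarity does all the work, is an $\ell_\infty$ bound on $h$. Since $F^*F = I$, I can write $h = F^*Fh = F^*(Fx^{\#} - y) + F^*(y - F\hat{x})$; the first term has $\ell_\infty$-norm at most $\eta_2$ by feasibility of $x^{\#}$, and the second equals $F^*e$, of $\ell_\infty$-norm at most $\eta_2$. Hence $\norm{h}_\infty \le 2\eta_2$. This is exactly where a general (non-unitary) Dantzig selector proof would instead have to route the estimate through the RIP and pick up a factor depending on $\delta_{2k}$; here that constant is zero.

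Next I would convert the minimality (ii) into the usual cone (robust null space) condition. Because $\hat{x}$ is supported on $S$, the reverse triangle inequality gives $\norm{\hat{x}}_1 - \norm{h_S}_1 + \norm{h_{\overline{S}}}_1 \le \norm{\hat{x} + h}_1 = \norm{x^{\#}}_1 \le \norm{\hat{x}}_1$, hence $\norm{h_{\overline{S}}}_1 \le \norm{h_S}_1$. Combining with the $\ell_\infty$ bound, $\norm{h}_1 = \norm{h_S}_1 + \norm{h_{\overline{S}}}_1 \le 2\norm{h_S}_1 \le 2\abs{S}\,\norm{h}_\infty \le 4k\eta_2$, which is \eqref{main_result_BP_li_eq1}. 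For the $\ell_2$ bound I would apply H\"older's inequality $\norm{h}_2^2 \le \norm{h}_\infty \norm{h}_1 \le 2\eta_2 \cdot 4k\eta_2 = 8k\eta_2^2$, so $\norm{h}_2 \le 2\sqrt{2k}\,\eta_2 \le 6\sqrt{k}\,\eta_2$, giving \eqref{main_result_BP_li_eq2} (in fact with room to spare, since $2\sqrt{2} < 6$).

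I do not expect a serious obstacle, since unitarity collapses the usual RIP-based Dantzig selector analysis to a two-line estimate; the points that need care are minor. First, I must confirm the additional $\eta_1$-constraint does not spoil feasibility of $\hat{x}$ or minimality of $x^{\#}$, which it does not, as noted. Second, the vectors are complex, so I should check that the reverse triangle inequality on the support and H\"older's inequality are applied to the moduli $\abs{h_i}$; both hold verbatim over $\C$. The only genuinely structural insight is recognizing that the two $\ell_\infty$ data-fidelity constraints, read through $F^*F = I$, directly control $\norm{h}_\infty$; once that is in hand, the $\ell_1$ and $\ell_2$ bounds are immediate.
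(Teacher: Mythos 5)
Your proof is correct, and it reaches the stated bounds by a genuinely more elementary route than the paper's. The paper treats this theorem as an instance of its robust null space property machinery: since $F$ is unitary, it verifies (for an arbitrary $0<\rho<1$) that $F$ satisfies the $\ell_2$ robust null space property with constant $\tau=\sqrt{k}$ relative to the norm $\|F^*F\cdot\|_\infty$, then reuses the chain Lemma \ref{l2impliesl1forS} $\to$ Lemma \ref{l1impliesdiff} (Theorem 4.20 of \cite{foucart2017mathematical}) $\to$ Lemma \ref{lperror}, exactly as in its proof of Theorem \ref{main_result_BP_l2}, and finally sends $\rho \to 0$ to extract the constants $4k$ and $6\sqrt{k}$. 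You instead unpack everything into a self-contained argument with the same two raw ingredients but no external lemmas: writing $h = x^{\#}-\hat{x}$, the feasibility constraints of Algorithm \ref{DS} combined with $F^*F=I$ give $\|h\|_\infty = \|F^*Fh\|_\infty \le 2\eta_2$ directly, and minimality of $\|x^{\#}\|_1$ gives the cone condition $\|h_{\overline{S}}\|_1 \le \|h_S\|_1$ --- which is precisely the content that the paper black-boxes inside Lemma \ref{l1impliesdiff}. Your route buys brevity, dispenses with the limiting argument in $\rho$, and your H\"older step $\|h\|_2^2 \le \|h\|_\infty \|h\|_1$ even yields the sharper constant $2\sqrt{2k}\,\eta_2$ in place of $6\sqrt{k}\,\eta_2$ in \eqref{main_result_BP_li_eq2}. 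What the paper's route buys is uniformity: one proof skeleton and one set of lemmas serve Theorems \ref{main_result_BP_l2} and \ref{main_result_BP_li} simultaneously, at the cost of looser constants and less transparency about where unitarity actually enters (namely, in making $F^*Fh$ equal to $h$ itself).
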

\subsection{Comparison to Related Work}
\label{related_work}

The authors of \cite{bafna2018thwarting} introduced the CRD framework which inspired this work. In fact, the main theorem (Theorem 2.2) of \cite{bafna2018thwarting} also provides an approximation error bound  for recovery via IHT. First, we note that the statement of the Theorem 2.2 in \cite{bafna2018thwarting} is missing the required hypothesis $t=O(n/k)$. This  hypothesis appears in Lemma 3.6 of \cite{bafna2018thwarting}, which is used to prove Theorem 2.2 of \cite{bafna2018thwarting}, but it appears to have been accidentally dropped from the statement of Theorem 2.2 of \cite{bafna2018thwarting}. By making the constants explicit, the proof of Lemma 3.6 of \cite{bafna2018thwarting} gives the same restricted isometry property that we do in Theorem \ref{RIPthm}. Therefore, the guarantees of \eqref{main_result_IHT eq2} and \eqref{main_result_IHT eq3} are essentially the same as those in Theorem 2.2 in \cite{bafna2018thwarting}. The main difference is that, to derive recovery guarantees for IHT from the restricted isometry property, we utilize Theorem \ref{6.18} below (which is a modified version of Theorem 6.18 of \cite{foucart2017mathematical}) while the authors of \cite{bafna2018thwarting} utilize Theorem 3.4 in \cite{bafna2018thwarting} (which is taken from \cite{baraniuk2010model}). In addition, we also provide recovery error bounds for IHT in \eqref{main_result_IHT eq4} and \eqref{main_result_IHT eq5} of Theorem \ref{main_result_IHT} that hold for larger values of $k$ and $t$ at the expense of the additional error term $\|e\|_2$.

Other works that provide guarantees include \cite{Hein17} and \cite{cisse2017parseval} where the authors frame the problem as one of regularizing the Lipschitz  constant of a network and provide a lower bound on the norm of the perturbation required to change the classifier decision. The authors of \cite{sinha2017certifying} use robust optimization to perturb the training data and provide a training procedure that updates parameters based on worst case perturbations. A similar approach to \cite{sinha2017certifying} is  \cite{wong2017provable}  in which the authors use robust optimization to provide lower bounds on the norm of adversarial perturbations on the training data. In \cite{lecuyer2018certified}, the authors use techniques from Differential Privacy \cite{dwork2014algorithmic} in order to augment the training procedure of the classifier to improve robustness to adversarial inputs. Another approach using randomization is \cite{li2018second} in which the authors add i.i.d. Gaussian noise to the input and provide guarantees of maintaining classifier predictions as long as the $\ell_2$-norm of the attack vector is bounded by a function that depends on the output of the classifier.  

Most defenses against adversarial inputs do not come with theoretical guarantees.  Instead, a large body of research has focused on finding practical ways to improve robustness to adversarial inputs by either augmenting the training data \cite{Goodfellow2015}, using adversarial inputs from various networks \cite{Tramer2017EAT}, or by reducing the dimensionality of the input \cite{Xu2017FeatureSqueezing}. For instance, \cite{madry2017towards} use robust optimization to make the network robust to worst case adversarial perturbations on the training data. However, the effectiveness of their approach is determined by the amount and quality of training data available and its similarity to the distribution of the test data. An approach similar to ours but without any theoretical guarantees is \cite{samangouei2018defense}. In this work, the authors use Generative Adversarial Networks (GANs) to estimate the distribution of the training data and during inference, use a GAN to reconstruct an input that is most similar to a given test input and is not adversarial.

\section{Restricted Isometry Property}
\label{rip_section}

We now establish the restricted isometry property for certain structured matrices. First, we give some definitions. 

\begin{defn}
Let $A$ be a matrix in $\C^{m \times N}$, let $M \subseteq \C^N$, and let $\delta \geq 0$. We say that $A$ satisfies the $M$-restricted isometry property (or M-RIP) with constant $\delta$ if 
\begin{align*}
(1-\delta)\|x\|^2_2 \leq \|Ax\|^2_2 \leq (1+\delta)\|x\|^2_2
\end{align*}
for all $x \in M$. 
 
\end{defn}

\begin{defn}
We define $M_{k}$ to be the set of all $k$-sparse vectors in $\C^N$ and similarly define $M_{k,t}$ to be the set of $(k,t)$-sparse vectors in $\C^{2n}$.  In other words, $M_{k,t}$ is the following subset of $\mathbb{C}^{2n}$: 
\begin{align}
M_{k,t} = \left\{x = [x_{1} \enspace x_2 ]^T \in \mathbb{C}^{2n}: x_1 \in \mathbb{C}^{n}, x_2 \in \mathbb{C}^{n}, \|x_1\|_0 \leq k, \|x_2\|_0 \leq t \right\} \nonumber
\end{align}
We define $S_{k,t}$ to be the following collection of subsets of $\cbr{1,\ldots,2n}$:  
$$
S_{k,t} = \cbr{S_1 \cup S_2 : S_1 \subseteq \cbr{1,\ldots,n}, S_2 \subseteq \cbr{n+1,\ldots,2n}, \text{card}(S_1) \leq k, \text{card}(S_2) \leq t  }
$$
Note that $S_{k,t}$ is the collection of supports of vectors in $M_{k,t}$. 
\end{defn}

\begin{thm}
\label{RIPthm}
Let $A = [F \enspace I] \in \mathbb{C}^{n \times 2n}$, where $F \in \mathbb{C}^{n \times n}$ is a unitary matrix with $|F_{ij}|^2 \leq \frac{c}{n}$ and $I \in \mathbb{C}^{n \times n}$ is the identity matrix. Then 
\begin{align}\label{RIPthm ineq}
\rbr{1-\sqrt{\dfrac{ckt}{n}}} \| x \|^2_2 \leq \|Ax\|^2_2 \leq \rbr{1+\sqrt{\dfrac{ckt}{n}}} \| x \|^2_2
\end{align}
for all $x \in M_{k,t}$. 
In other words, $A$ satisfies the $M_{k,t}$-RIP property with constant $\sqrt{\dfrac{ckt}{n}}$. 
\end{thm}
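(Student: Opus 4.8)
The plan is to expand $\|Ax\|_2^2$ directly and reduce the entire statement to bounding a single cross term. Writing $x = [x_1 \enspace x_2]^T \in M_{k,t}$, we have $Ax = Fx_1 + x_2$, so that
\begin{align*}
\|Ax\|_2^2 = \|Fx_1\|_2^2 + \|x_2\|_2^2 + 2\,\mathrm{Re}\,\langle Fx_1, x_2\rangle.
\end{align*}
Since $F$ is unitary, $\|Fx_1\|_2 = \|x_1\|_2$, and hence $\|Ax\|_2^2 = \|x\|_2^2 + 2\,\mathrm{Re}\,\langle Fx_1, x_2\rangle$. Thus the claimed two-sided bound \eqref{RIPthm ineq} is equivalent to the single estimate $\abs{2\,\mathrm{Re}\,\langle Fx_1, x_2\rangle} \leq \sqrt{ckt/n}\,\|x\|_2^2$, and it suffices to control $\abs{\langle Fx_1, x_2\rangle}$.

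Next I would restrict attention to the supports. Let $S_1 = \supp(x_1)$ and $S_2 = \supp(x_2)$, so $|S_1| \leq k$ and $|S_2| \leq t$. Then $\langle Fx_1, x_2\rangle = (x_2)_{S_2}^{*}\, F_{S_2, S_1}\, (x_1)_{S_1}$, where $F_{S_2, S_1}$ is the submatrix of $F$ with rows indexed by $S_2$ and columns indexed by $S_1$. By Cauchy--Schwarz (equivalently, by the definition of the operator norm),
\begin{align*}
\abs{\langle Fx_1, x_2\rangle} \leq \|F_{S_2, S_1}\|_{2\to 2}\,\|x_1\|_2\,\|x_2\|_2.
\end{align*}
The key step is to bound the operator norm of this submatrix. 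I would use the crude but sufficient estimate $\|F_{S_2, S_1}\|_{2\to 2} \leq \|F_{S_2, S_1}\|_F$; the entrywise hypothesis $|F_{ij}|^2 \leq c/n$ then gives $\|F_{S_2, S_1}\|_F^2 = \sum_{i \in S_2,\, j \in S_1} |F_{ij}|^2 \leq |S_1|\,|S_2|\,c/n \leq ckt/n$, so that $\abs{\langle Fx_1, x_2\rangle} \leq \sqrt{ckt/n}\,\|x_1\|_2\,\|x_2\|_2$.

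Finally I would combine this with the elementary inequality $2\|x_1\|_2\|x_2\|_2 \leq \|x_1\|_2^2 + \|x_2\|_2^2 = \|x\|_2^2$ to obtain $\abs{2\,\mathrm{Re}\,\langle Fx_1, x_2\rangle} \leq \sqrt{ckt/n}\,\|x\|_2^2$, which is exactly the reduction established in the first step. The only place that requires any real thought is the operator-norm bound on the submatrix; the Frobenius-norm dominance makes this routine, though it is slightly lossy, and one could in principle sharpen the constant by estimating the eigenvalues of $F_{S_2,S_1}^{*}F_{S_2,S_1}$ via a Gershgorin- or Schur-test-type argument if a tighter RIP constant were needed. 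Everything else is a direct computation.
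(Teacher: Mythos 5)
Your proof is correct, and it takes a genuinely different, more elementary route than the paper's. You exploit the unitarity of $F$ and the block structure of $A = [F \enspace I]$ at the outset to get the exact identity $\|Ax\|_2^2 = \|x\|_2^2 + 2\,\mathrm{Re}\abr{Fx_1, x_2}$, so the deviation from isometry is precisely one cross term, which you control by Cauchy--Schwarz, Frobenius-norm domination of the operator norm, and AM--GM. The paper instead fixes $S = S_1 \cup S_2 \in S_{k,t}$ and argues spectrally: it writes $\|Ax\|_2^2/\|x\|_2^2$ as a Rayleigh quotient of $A_S^*A_S$, reduces via normality to bounding the eigenvalues of $(A_S^*A_S - I)^{*}(A_S^*A_S - I)$, observes that this matrix is block diagonal with blocks $X^{*}X$ and $XX^{*}$, where $X$ is exactly your submatrix $F_{S_2,S_1}$ (it is the off-diagonal block of $A_S^*A_S - I$, i.e.\ the matrix form of your cross term), and bounds the eigenvalues of $X^{*}X$ by the Gershgorin disc theorem. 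Both proofs therefore hinge on the same estimate $\|F_{S_2,S_1}\|_{2\to 2}^2 \leq ckt/n$, reached by different but equally lossy relaxations: yours is $\lambda_{\max}(X^{*}X) \leq \operatorname{tr}(X^{*}X) = \|X\|_F^2 \leq ckt/n$, the paper's is $\lambda_{\max}(X^{*}X) \leq \max_i \sum_j |(X^{*}X)_{ij}| \leq k \cdot ct/n$, and they give identical constants --- so, contrary to your closing remark, a Gershgorin-type argument yields no sharpening here; the paper uses exactly that and lands on the same $\sqrt{ckt/n}$. What your argument buys is brevity and transparency: no eigenvector bases, no normality, no block-diagonal bookkeeping. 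What the paper's formulation buys is the standard coherence-style template that directly exhibits control of $\|A_S^{*}A_S - I\|_{2\to 2}$, the quantity that reappears downstream (cf.\ Lemma \ref{a_minus_i_norm}); but since that lemma rederives this bound from the RIP statement itself, nothing later in the paper actually depends on which proof is used.
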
 
\begin{proof}
In this proof, if $B$ denotes an matrix in $\C^{n \times n}$, then $\lambda_1(B),\ldots, \lambda_n(B)$ denote the eigenvalues of $B$ ordered so that  $|\lambda_1(B)| \leq \cdots \leq |\lambda_n(B)|$. 
It suffices to fix an $S = S_1 \cup S_2 \in S_{k,t}$ and prove \eqref{RIPthm ineq} for all non-zero $x \in \C^{S}$. 

Since $A^{*}_{S} A_S$ is normal, there is an orthonormal basis of eigenvectors $u_1,\ldots,u_{k+t}$ for $A_S^{*} A_S$, where $u_i$ corresponds to the eigenvalue $\lambda_i(A^{*}_{S} A_S)$. 
For any non-zero $x \in \C^{S}$, we have $x = \sum_{i=1}^{k+t} c_i u_i$ for some $c_i \in \C$, so  
\begin{align}\label{RIPthm 1}
\frac{\|Ax\|_2^2}{\|x\|_2^2} = \frac{\abr{A_S^{*} A_S x, x}}{\abr{x,x}} = \frac{\sum_{i=1}^{k+t} \lambda_i(A_S^{*}A_S) c_i^2}{\sum_{i=1}^{k+t}  c_i^2}.  
\end{align}

Thus it will suffice to prove that 
$|\lambda_i(A_S^{*}A_S)-1| \leq \sqrt{\frac{ckt}{n}}$
for all $i$. Moreover, 
\begin{align}\label{RIPthm 2}
|\lambda_i(A_S^* A_S) - 1| = |\lambda_i(A_S^* A_S - I)| = \sqrt{   \lambda_i \rbr{ (A_S^* A_S - I)^{*}(A_S^* A_S - I)     } }
\end{align}
where the last equality holds because $A_S^* A_S - I$ is normal. 
By combining \eqref{RIPthm 1} and \eqref{RIPthm 2}, we see that \eqref{RIPthm ineq} will hold upon showing that the eigenvalues
of $(A_S^* A_S - I)^{*}(A_S^* A_S - I)$ are bounded by $ckt/n$. 

So far we have not used the structure of $A$, but now we must. Observe that $(A_S^* A_S - I)^{*}(A_S^* A_S - I)$ is a block diagonal matrix with two diagonal blocks of the form $X^{*}X$ and $XX^{*}$. Therefore the three matrices $(A_S^* A_S - I)^{*}(A_S^* A_S - I)$, $X^*X$, and $XX^*$ have the same non-zero eigenvalues. Moreover, $X$ is simply the matrix $F_{S_1}$ with those rows not indexed by $S_2$ deleted. The hypotheses on $F$ imply that the entries of $X^*X$ satisfy 
$
|(X^*X)_{ij}| \leq \frac{ct}{n}.
$
So the Gershgorin disc theorem implies that each eigenvalue $\lambda$ of $X^*X$ and (hence) of  $(A_S^* A_S - I)^{*}(A_S^* A_S - I)$ satisfies 
$|\lambda| \leq \frac{ckt}{n}$.
 
\end{proof}

\section{Iterative Hard Thresholding}
\label{iht}

First we present Theorem \ref{6.18} and then use it to prove Theorem \ref{main_result_IHT}.

\begin{algorithm}
    \caption{$(k,t)$-Iterative Hard Thresholding}
    \label{IHT}
    \textbf{Input:} The observed vector $y \in \mathbb{C}^{n}$, the measurement matrix $A \in \mathbb{C}^{n \times 2n}$, and positive integers $k,t, T \in \mathbb{Z}^{+}$\\
    \textbf{Output:} $x^{[T+1]} \in M_{k,t}$
        \begin{algorithmic}[1] 
        \Procedure{IHT}{$y,A,k,t,T$} 
            \State $x^{[0]}\gets 0$
            \For{$i \in [0, \dots, T]$} 
                \State $z^{[i+1]} \gets x^{[i]} + A^*(y - Ax^{[i]})$
                \State $x^{[i+1]} = (z^{[i+1]})_{h(k,t)}$
            \EndFor
            \State \textbf{return} $x^{[T+1]}$
        \EndProcedure
    \end{algorithmic}
\end{algorithm}

\begin{theorem}
\label{6.18}

Let $A \in \mathbb{C}^{n \times 2n}$ be a matrix. Let $1 \leq k,t \leq n$ be positive integers and suppose  $\delta_3$ is a $M_{3k,3t}$-RIP constant for $A$ and that $\delta_2$ is a $M_{2k,2t}$-RIP constant for $A$. 
Let $x \in \C^{2n}$, $r \in \C^n$, $y = Ax+r$, and $S \in S_{k,t}$. Letting $x^{[T+1]} = IHT(y,A,k,t,T)$, if $\delta_3 < 1/\sqrt{3}$, then we have the approximation error bound 
\begin{align*}
\| x^{[T+1]} - x_S\|_2 \leq \rho^{(T+1)} \| x^{[0]} - x_S \|_2 + \tau \|Ax_{\overline{S}} + r\|_2 
\end{align*}

where $\rho := \sqrt{3} \delta_3 < 1$ and $(1-\rho)\tau = \sqrt{3}\sqrt{1+\delta_2} \leq 2.18$. Thus, the first term on the right goes to $0$ as $T$ goes to $\infty$.  
\end{theorem}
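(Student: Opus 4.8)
The plan is to reduce the theorem to a single one-step contraction estimate and then iterate. First I would absorb the model mismatch into an effective noise term: setting $e = Ax_{\overline{S}} + r$, the hypothesis $y = Ax + r$ becomes $y = Ax_S + e$, where $x_S$ is $(k,t)$-sparse because $S \in S_{k,t}$; note the right-hand side of the claim is exactly $\|e\|_2 = \|Ax_{\overline{S}} + r\|_2$. With $z^{[i+1]} = x^{[i]} + A^*(y - Ax^{[i]})$ as in Algorithm \ref{IHT}, the goal reduces to proving, for every $i$, the recursion
$$\|x^{[i+1]} - x_S\|_2 \le \rho\|x^{[i]} - x_S\|_2 + (1-\rho)\tau\|e\|_2, \qquad \rho = \sqrt{3}\,\delta_3,\ \ (1-\rho)\tau = \sqrt{3}\sqrt{1+\delta_2}.$$
Telescoping this from $i=0$ to $T$ and summing $\sum_{i=0}^{T}\rho^i \le 1/(1-\rho)$ (legitimate since $\delta_3 < 1/\sqrt{3}$ forces $\rho < 1$) then produces $\rho^{(T+1)}\|x^{[0]}-x_S\|_2 + \tau\|e\|_2$, which is the stated bound; the numerical estimate $(1-\rho)\tau \le 2.18$ follows from $\delta_2 \le \delta_3 < 1/\sqrt{3}$.

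The heart of the matter, and the step I expect to be the main obstacle, is a sharpened hard-thresholding lemma. Writing $T = \supp(x^{[i+1]}) \cup S \in S_{2k,2t}$, I would establish
$$\|x^{[i+1]} - x_S\|_2 \le \sqrt{3}\,\|(z^{[i+1]} - x_S)_T\|_2.$$
The naive triangle-inequality argument only gives the constant $2$ here, which would force the much stronger hypothesis $\delta_3 < 1/2$; extracting $\sqrt{3}$ instead is precisely what buys the weaker requirement $\delta_3 < 1/\sqrt{3}$. Setting $S' = S \setminus \supp(x^{[i+1]})$ (discarded target indices) and $T' = \supp(x^{[i+1]}) \setminus S$ (spuriously selected indices), a direct coordinate expansion gives $\|x^{[i+1]} - x_S\|_2^2 = \|(z^{[i+1]} - x_S)_T\|_2^2 + \sum_{j \in S'}(|x_j|^2 - |z^{[i+1]}_j - x_j|^2)$, so it suffices to show $\sum_{S'}|x_j|^2 \le 2\|(z^{[i+1]} - x_S)_T\|_2^2$. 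Because hard thresholding keeps the $k$ (resp. $t$) largest entries in each block, one has $|S'| \le |T'|$ block-wise and $|z^{[i+1]}_j| \le |z^{[i+1]}_l|$ for discarded $j$ and retained $l$; fixing a block-wise injection $S' \hookrightarrow T'$ and using $(a+b)^2 \le 2a^2 + 2b^2$ together with $x_l = 0$ on $T'$ yields $\sum_{S'}|x_j|^2 \le 2\big(\sum_{S'}|z_j - x_j|^2 + \sum_{T'}|z_l|^2\big) \le 2\|(z^{[i+1]} - x_S)_T\|_2^2$. The delicate points are the combinatorics (the injection and the $|S'|\le|T'|$ count) and the degenerate case where a block of $z^{[i+1]}$ carries fewer than the allowed number of nonzeros, in which the discarded target indices have $z_j = 0$ and contribute no excess.

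With the lemma in hand, I would expand the residual using $y = Ax_S + e$,
$$z^{[i+1]} - x_S = (I - A^*A)(x^{[i]} - x_S) + A^*e,$$
restrict to $T$, and control the two pieces by the restricted isometry property. Writing $d = x^{[i]} - x_S$, which is supported on $\supp(x^{[i]}) \cup S \in S_{2k,2t}$, the union $T \cup \supp(d)$ lies in $S_{3k,3t}$, so the standard RIP consequence $|\langle (I - A^*A)d, w\rangle| \le \delta_3\|d\|_2\|w\|_2$ for $w$ supported on $T$ gives $\|[(I - A^*A)d]_T\|_2 \le \delta_3\|d\|_2$; multiplied by the $\sqrt{3}$ from the lemma, this is the contraction factor $\rho = \sqrt{3}\,\delta_3$. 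For the noise piece, since $T \in S_{2k,2t}$ the $M_{2k,2t}$-RIP gives $\|A_T\|_{op}^2 = \|A_T^*A_T\|_{op} \le 1 + \delta_2$, whence $\|(A^*e)_T\|_2 = \|A_T^*e\|_2 \le \sqrt{1+\delta_2}\,\|e\|_2$, yielding the coefficient $\sqrt{3}\sqrt{1+\delta_2}$. Combining the three estimates gives the one-step recursion above and completes the plan. The RIP consequence used for the first piece is itself a short lemma — decompose onto the union support $U$ and bound by $\|I - A_U^*A_U\|_{op} \le \delta_{|U|}$ — which I would either cite from the restricted isometry setup or include as a two-line argument.
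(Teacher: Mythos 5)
Your proposal is correct and takes essentially the same approach as the proof the paper relies on: the paper omits the argument, deferring to Theorem 6.18 of \cite{foucart2017mathematical}, and your reconstruction --- a one-step contraction built from the sharpened $\sqrt{3}$ hard-thresholding bound on $T = S \cup \supp(x^{[i+1]})$, the RIP estimates $\|[(I-A^*A)d]_T\|_2 \le \delta_3\|d\|_2$ and $\|A_T^*e\|_2 \le \sqrt{1+\delta_2}\,\|e\|_2$, and geometric-series iteration --- is precisely that argument, adapted to the $(k,t)$-block structure. Your adaptation is sound, including the block-wise injection $S'\hookrightarrow T'$ and the degenerate case where a block of $z^{[i+1]}$ has fewer nonzeros than the sparsity budget.
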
 Theorem \ref{6.18} is a modification of Theorem 6.18 of \cite{foucart2017mathematical}. More specifically, Theorem 6.18 of \cite{foucart2017mathematical} considers $M_{3k}$, $M_{2k}$, and $S_{k}$ in place of $M_{3k,3t}$ and $M_{2k,2t}$ and $S_{k,t}$ and any dimension $N$ in place of $2n$. The proofs are very similar, so we omit the proof of Theorem \ref{6.18}. We will now prove a lemma that will be required for the proof of Theorem \ref{main_result_IHT}. For the proof of Lemma \ref{partial_RIP} and Theorem \ref{main_result_IHT}, we use the following convention: let $A \in \C^{m \times N}$ be a matrix, then, we denote by $(A)_S$, the ${m \times N}$ matrix that is obtained by starting with $A$ and zeroing out the columns indexed by $\overline{S}$. Note that $(A)_{S} = A - (A)_{\overline{S}}$.

\begin{lemma}
\label{partial_RIP}
Let $F \in \C^{n \times n}$ be a unitary matrix with $|F_{ij}|^2 \leq \frac{c}{n}$ and let $S \subseteq [n]$ be a index set with $|S| = t$.  Then for any $k$-sparse vector $z \in \C^{n}$, we have:
\begin{align}
    \|(F^{*})_SFz\|^2_2 \leq \frac{ktc}{n}\|z\|_2^2 \nonumber
\end{align}
\end{lemma}

\begin{proof}
First note that  $(F^{*})_{S} 
\in \C^{n \times n}$ contains only $t$ non-zero columns since $|S| = t$ Therefore, we have $|((F^{*})_{S}F)_{ij}| \leq \frac{tc}{n}$ since $|F_{ij}|^{2} \leq \frac{c}{n}$. Further, since the non-zero columns of $(F^{*})_{S}$ are orthogonal to each other, we get $((F^{*})_{S})^{*}(F^{*})_{S} = (I)_{S}$, where $I \in \C^{n \times n}$  is the identity matrix. Using this, we have for any $w \in \C^n$,
\begin{align}
    \|(F^{*})_{S}Fw\|_2^2 = \langle (F^{*})_{S}Fw, (F^{*})_{S}Fw \rangle =  \langle ((F^{*})_{S}F)^{*} (F^{*})_{S}Fw, w \rangle = \langle (F^{*})_{S}Fw, w \rangle = |\abr{(F^{*})_{S}Fw, w }| \nonumber 
\end{align}

Now let $V \subseteq [n]$ be any index set with cardinality $k$, that is $|V| = k$ and let $z \in \C^n$ be any vector supported on $V$.  We then get, 
\begin{align}
   \|(F^{*})_{S}Fz\|_2^2 = \left |\abr{(F^{*})_{S}Fz,z} \right | = \left |\sum_{k\in V} z^{*}_{k} \left(\sum_{j \in V} ((F^{*})_{S}F)_{kj}z_{j}\right)\right | &\leq \sum_{k\in V} |z^{*}_{k}| \left(\sum_{j \in V} |((F^{*})_{S}F)_{kj}||z_{j}|\right) \nonumber \\
    &\leq \sum_{k\in V} |z^{*}_{k}| \left(\frac{tc}{n}\sum_{j \in V} |z_{j}|\right) \nonumber \\
&= \frac{tc}{n}\|z\|_1^2 \leq \frac{ktc}{n}\|z\|_2^2 \nonumber 
\end{align}
where we use the fact that $z$ is $k$-sparse for the last inequality.
\end{proof}

\noindent Now we provide the proof for Theorem \ref{main_result_IHT}. 
\begin{proof}[\textbf{Proof of Theorem \ref{main_result_IHT}}]

Theorem \ref{RIPthm} implies that the statement of Theorem \ref{6.18} holds with 

$\delta_3 = \sqrt{\frac{c\cdot 3k \cdot 3t}{n}}$ and $\delta_2 = \sqrt{\frac{c\cdot 2k \cdot 2t}{n}}$. Noting that $y =  A [\hat{x}_{h(k)} \enspace e ]^{T} + F \hat{x}_{t(k)}$, where $[\hat{x}_{h(k)} \enspace e ]^{T} \in M_{k,t}$, set $x^{[T+1]} = IHT(y,A,k,t,T)$ and apply Theorem \ref{6.18} with $x = [ \hat{x}_{h(k)} \enspace  e]^{T}$ , $r = F\hat{x}_{t(k)}$, and $S = \supp(x)$. Letting $x^{[T+1]} = [ \hat{x}^{[T+1]} \enspace e^{[T+1]}]^T $,  use the facts that $\|  \hat{x}^{[T+1]} - \hat{x}_{h(k)} \|_2 \leq \|  x^{[T+1]} - x_S \|_2$ and $\| F\hat{x}_{t(k)} \|_2 = \|\hat{x}_{t(k)} \|_2$.
That will give \eqref{main_result_IHT eq1}. Letting $T = \left( \frac{\log(1/\epsilon) + \log(\sqrt{\| \hat{x}_{h(k)} \|_2^2 + \|e\|_2^2      })}{\log(1/\rho)}\right)$,  gives $\rho^{T} \sqrt{  \| \hat{x}_{h(k)} \|_2^2 + \|e\|_2^2   }  \leq \epsilon$,  which can be substituted in \eqref{main_result_IHT eq1} to get \eqref{main_result_IHT eq6}. Noting that $||e^{[T]} - e||_2   \leq   \tau||\hat{x}_{t(k)}||_2 + \epsilon$, we can use the same reasoning as used in \cite{bafna2018thwarting} to get:

\begin{align}
\label{main_result_IHT eq2}
\| \hat{x}^{[T+1]} - \hat{x}_{h(k)} \|_{\infty} &\leq \sqrt{\frac{2ct}{n}} \left(\tau\| \hat{x}_{t(k)} \|_2  + \epsilon\right)\\
\label{main_result_IHT eq3}
\| \hat{x}^{[T+1]} - \hat{x}_{h(k)} \|_2 &\leq \sqrt{\frac{4ckt}{n}} \left(\tau\| \hat{x}_{t(k)} \|_2   + \epsilon\right)
\end{align}

\noindent which are  the essentially the same as the results of Theorem 2.2 in \cite{bafna2018thwarting}. \\

Now we prove \eqref{main_result_IHT eq4}. Let $z^{[T]}$ be as defined in Algorithm \ref{IHT} with  $z^{[T]} = [z_1^{[T]} \enspace z_2^{[T]}]^T \in \C^{2n}$ where $z_1^{[T]}, z_2^{[T]} \in \C^{n}$. Note that $\hat{x}^{[T]} = (z_1^{[T]})_{h(k)}$. Therefore, we have  $z_1^{[T]} = F^{*}(y - e^{[T-1]})$, where  $e^{[T-1]} = (y - F\hat{x}^{[T-2]})_{h(t)}$. Now let $S$ be the set of indices selected by the hard thresholding operation $h(t)$ to get $e^{[T-1]}$. Then observe that  $z_1^{[T]}= F^{*}(y - (y - F\hat{x}^{[T-2]})_{S}) $.  Next, note that $\|z_1^{[T]} - \hat{x}^{[T]}\|^2_2 \leq \|z_1^{[T]} - \hat{x}_{h(k)}\|^2_2$ as $\hat{x}^{[T]}$ is a best $k$-sparse approximation to $z_1^{[T]}$. We can thus write,
\begin{align}
    \|(z_1^{[T]} - \hat{x}_{h(k)}) -  (\hat{x}^{[T]} - \hat{x}_{h(k)})\|_2^2 &= \|z_1^{[T]} -\hat{x}_{h(k)}\|_2^2 - 2\text{Re}\langle z_1^{[T]} -\hat{x}_{h(k)}, \hat{x}^{[T]}-\hat{x}_{h(k)}\rangle + \|\hat{x}^{[T]} - \hat{x}_{h(k)}\|_2^2 \nonumber
\end{align}
Therefore, we have,
\begin{align}
\|\hat{x}^{[T]} - \hat{x}_{h(k)}\|_2^2 &\leq 2\text{Re}\langle z_1^{[T]} -\hat{x}_{h(k)}, \hat{x}^{[T]}-\hat{x}_{h(k)}\rangle \nonumber \\
&\leq 2|\langle z_1^{[T]} -\hat{x}_{h(k)}, \hat{x}^{[T]}-\hat{x}_{h(k)}\rangle| \nonumber \\
&\leq 2\|z_1^{[T]} - \hat{x}_{h(k)}\|_2 \|\hat{x}^{[T]} - \hat{x}_{h(k)}\|_2 \nonumber 
\end{align}
If $\|\hat{x}^{[T]} - \hat{x}_{h(k)}\|_2 > 0$, then $ \|\hat{x}^{[T]} - \hat{x}_{h(k)}\|_2 \leq 2\|z_1^{[T]} - \hat{x}_{h(k)}\|_2$.  Now note that,
\begin{align}
z_1^{[T]} &= \hat{x}+ F^{*}e - F^{*}(F(\hat{x}- \hat{x}^{[T-2]}) + e)_{S} \nonumber \\
    &= \hat{x} + F^{*}e - (F^{*})_S(F(\hat{x} - \hat{x}^{[T-2]}) + e) \nonumber \\
    &= \hat{x}+ (F^{*} - (F^{*})_S)e - (F^{*})_SF(\hat{x} - \hat{x}^{[T-2]}) \nonumber 
\end{align}

\noindent Using the fact that $(F^{*})_{\overline{S}} = F^{*} - (F^{*})_{S}$, we can simplify the above to get:
\begin{align}
    \|z_1^{[T]} - \hat{x}_{h(k)}\|_2 = \|(F^{*})_{\overline{S}}F\hat{x}_{t(k)} + (F^{*})_{\overline{S}}e - (F^{*})_{S}F(\hat{x}_{h(k)}- \hat{x}^{[T-2]})\|_2 \nonumber 
\end{align} Therefore,
\begin{align}
\|\hat{x}^{[T]} - \hat{x}_{h(k)}\|_2 &\leq 2\left(\|(F^{*})_{\overline{S}}F\|_{2 \to 2}\|\hat{x}_{t(k)}\|_2+ \|(F^{*})_{\overline{S}}\|_{2\to 2}\|e\|_2 + \|(F^{*})_{S}F(\hat{x}_{h(k)}- \hat{x}^{[T-2]})\|_2\right) \nonumber \\
&\leq 2\left(\|\hat{x}_{t(k)}\|_2+ \|e\|_2\right)+ 2\|(F^{*})_{S}F(\hat{x}_{h(k)}- \hat{x}^{[T-2]})\|_2 \nonumber
\end{align}
where we use  $ \|(F^{*})_{\overline{S}}\|_{2\to 2} \leq \|F^{*}\|_{2\to 2} = 1$. Now since $\hat{x}_{h(k)} - \hat{x}^{[T-2]}$ is $2k$-sparse, we can  use the result of Lemma \ref{partial_RIP} to get:

\begin{align}
\|\hat{x}^{[T]} - \hat{x}_{h(k)}\|_2 &\leq 2\left(\|\hat{x}_{t(k)}\|_2+ \|e\|_2\right)+ 2\left(\sqrt{\frac{2ktc}{n}}\right)\|\hat{x}^{[T-2]} - \hat{x}_{h(k)} \|_2  \nonumber
\end{align}

Now let $\rho =2\sqrt{2}\sqrt{\frac{ktc}{n}}, \tau(1-\rho) = 2$ and note that if $ \rho < 1$, we can use induction on $T$ to get \eqref{main_result_IHT eq4}. Then for any $0 < \epsilon < 1$ and any $T \geq \left( \frac{\log(1/\epsilon) + \log(\| \hat{x}_{h(k)}\|_2)}{\log(1/\rho)}\right)$, we have $\rho^{T}(\| \hat{x}_{h(k)}\|_2) \leq \epsilon$ which gives us \eqref{main_result_IHT eq5}.

\end{proof}

\section{Basis Pursuit}
\label{bp}

Next we introduce the Basis Pursuit algorithm and prove its recovery guarantees for $\ell_0$-norm and $\ell_2$-norm noise. We begin by stating some definitions that will be required in the proofs of the main theorems.

\begin{algorithm}
    \caption{Basis Pursuit}
    \label{BP}
     \textbf{Input:} The observed vector $y \in \mathbb{C}^{m}$, where $y = A\hat{x} + e$, the measurement matrix $A \in \mathbb{C}^{m \times N}$, and the norm of the error vector $\eta$ such that $\|e\|_2 \leq \eta$ \\
    \textbf{Output:} $x^{\#} \in \C^{N}$
    \begin{algorithmic}[1] 
        \Procedure{BP}{$y,A,\eta$} 
            \State $x^{\#} \gets  \operatorname{arg\,min}_{z \in \mathbb{C}^{N}}\|z\|_1 \enspace \text{subject to} \|Az - y\|_2 \leq \eta$
            \State \textbf{return} $x^{\#}$
        \EndProcedure
    \end{algorithmic}
\end{algorithm}

\begin{defn}
The matrix $A \in \C^{m \times N}$ satisfies the robust null space property with constants $0 < \rho < 1$, $\tau > 0$ and norm $\| \cdot \|$ if 
for every set $S \subseteq [N]$ with $\card(S) \leq s$ and 
for every $v \in \C^N$ we have 
$$
\|v_S\|_1 \leq \rho \| v_{\overline{S}} \|_1 + \tau \|A v \|
$$
\end{defn}

\begin{defn}
The matrix $A \in C^{m \times N}$ satisfies the $\ell_q$ robust null space property of order $s$ with constants $0 < \rho < 1$, $\tau > 0$ and norm $\| \cdot \|$ if for every set $S \subseteq [N]$ with $\card(S) \leq s$ and for every $v \in \C^N$ we have 
$$
\|v_S\|_q \leq \frac{1}{s^{1-1/q}} \rho \| v_{\overline{S}} \|_1 + \tau \|A v \|
$$
Note that if $q=1$ then this is simply the robust null space property. 
\end{defn}

The proof of Theorem \ref{main_result_BP_l0} requires the following theorem (whose full proof is given in the \cite{foucart2017mathematical}). 

\begin{theorem}[Theorem 4.33 in \cite{foucart2017mathematical}]
\label{four_three_three}
Let $a_1, \dots, a_N$ be the columns of $A \in \mathbb{C}^{m \times N}$, let $x\in \mathbb{C}^{N}$ with $s$ largest absolute entries supported on $S$, and let $y = Ax + e$ with $\|e\|_2 \leq \eta$. For $\delta, \beta, \gamma, \theta, \tau \geq 0$ with $\delta < 1$, assume that: 
\begin{align*}
\|A^{*}_SA_S - I\|_{2 \to 2} \leq \delta, \quad \max_{l \in \overline{S}}\|A^{*}_Sa_l\|_2 \leq \beta,
\end{align*}
and that there exists a vector $u = A^{*}h \in \mathbb{C}^{N}$ with $h \in \mathbb{C}^{m}$ such that
\begin{align*}
    \|u_S - \text{sgn}(x_S)\|_2 \leq \gamma, \quad \|u_{\overline{S}}\|_{\infty} \leq \theta, \quad \text{and } \|h\|_2 \leq \tau \sqrt{s}.
\end{align*}

If $\rho:= \theta + \frac{\beta \gamma}{(1 - \delta)} < 1$, then a minimizer $x^{\#}$ of $\|z\|_1$ subject to $\|Az - y \|_2 \leq \eta$ satisfies:
\begin{align*}
    \|x^{\#} - x\|_2 \leq \frac{2}{(1-\rho)}\left(1 + \frac{\beta}{(1-\delta)} \right )\|x_{\overline{S}}\|_1 + \left(\frac{2(\mu\gamma + \tau\sqrt{s})}{1-\rho}\left( 1 + \frac{\beta}{1-\delta} \right)+ 2\mu \right)\eta
\end{align*}

where $ \mu := \frac{\sqrt{ 1 + \delta}}{1-\delta}$ and $\text{sgn}(x)_i =  \begin{cases} 
      0, & x_i = 0 \\
      1, &  x_i > 0 \\
      -1. &  x_i < 0  
   \end{cases}
$.
\end{theorem}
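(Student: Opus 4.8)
The plan is to run the standard inexact-dual-certificate argument for $\ell_1$-minimization. Set $v = x^{\#} - x$ to be the recovery error. Since $y = Ax + e$ with $\|e\|_2 \leq \eta$, the vector $x$ is feasible for the program, so the minimizer satisfies $\|x^{\#}\|_1 \leq \|x\|_1$; and by the triangle inequality $\|Av\|_2 \leq \|Ax^{\#} - y\|_2 + \|Ax - y\|_2 \leq 2\eta$. The goal is then to bound $\|v\|_2 \leq \|v_S\|_2 + \|v_{\overline{S}}\|_1$ by controlling the two pieces $\|v_S\|_2$ and $\|v_{\overline{S}}\|_1$ separately, and only at the very end inserting $\|Av\|_2 \leq 2\eta$.

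First I would bound $\|v_S\|_2$ using only the two operator hypotheses. Writing $A_S^{*}Av = A_S^{*}A_S v_S + A_S^{*}A_{\overline{S}}v_{\overline{S}}$ and noting that $\|A_S^{*}A_S - I\|_{2\to2}\leq\delta$ makes $A_S^{*}A_S$ invertible with $\|(A_S^{*}A_S)^{-1}\|_{2\to2}\leq (1-\delta)^{-1}$, I get $\|v_S\|_2 \leq (1-\delta)^{-1}\|A_S^{*}Av - A_S^{*}A_{\overline{S}}v_{\overline{S}}\|_2$. Bounding $\|A_S^{*}Av\|_2 \leq \sqrt{1+\delta}\,\|Av\|_2$ (since $\|A_S\|_{2\to2}\leq\sqrt{1+\delta}$) and $\|A_S^{*}A_{\overline{S}}v_{\overline{S}}\|_2 \leq \sum_{l\in\overline{S}}|v_l|\,\|A_S^{*}a_l\|_2 \leq \beta\|v_{\overline{S}}\|_1$ yields $\|v_S\|_2 \leq \mu\|Av\|_2 + \frac{\beta}{1-\delta}\|v_{\overline{S}}\|_1$ with $\mu = \frac{\sqrt{1+\delta}}{1-\delta}$.

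The crux, and the step I expect to be the main obstacle, is bounding $\|v_{\overline{S}}\|_1$ via the certificate $u = A^{*}h$. Starting from $\|x\|_1 \geq \|x^{\#}\|_1 = \|x_S + v_S\|_1 + \|x_{\overline{S}} + v_{\overline{S}}\|_1$ and using $\|x_S + v_S\|_1 \geq \text{Re}\,\abr{\text{sgn}(x_S), x_S + v_S} = \|x_S\|_1 + \text{Re}\,\abr{\text{sgn}(x_S), v_S}$ together with $\|x_{\overline{S}} + v_{\overline{S}}\|_1 \geq \|v_{\overline{S}}\|_1 - \|x_{\overline{S}}\|_1$, I obtain $\|v_{\overline{S}}\|_1 \leq 2\|x_{\overline{S}}\|_1 - \text{Re}\,\abr{\text{sgn}(x_S), v_S}$. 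Now I replace $\text{sgn}(x_S)$ by $u_S$ at the cost of $\gamma\|v_S\|_2$ (via $\|u_S - \text{sgn}(x_S)\|_2 \leq \gamma$), use $\abr{u, v} = \abr{h, Av}$ to split $\abr{u_S, v_S} = \abr{h, Av} - \abr{u_{\overline{S}}, v_{\overline{S}}}$, and apply the remaining certificate bounds $|\abr{h,Av}| \leq \tau\sqrt{s}\,\|Av\|_2$ and $|\abr{u_{\overline{S}}, v_{\overline{S}}}| \leq \theta\|v_{\overline{S}}\|_1$. After substituting the Step-one estimate for the $\gamma\|v_S\|_2$ term this gives $(1-\theta)\|v_{\overline{S}}\|_1 \leq 2\|x_{\overline{S}}\|_1 + (\tau\sqrt{s} + \mu\gamma)\|Av\|_2 + \frac{\beta\gamma}{1-\delta}\|v_{\overline{S}}\|_1$; the delicate point is that the self-referential $\|v_{\overline{S}}\|_1$ term can be absorbed precisely because $\rho = \theta + \frac{\beta\gamma}{1-\delta} < 1$, leaving $\|v_{\overline{S}}\|_1 \leq \frac{1}{1-\rho}\rbr{2\|x_{\overline{S}}\|_1 + (\mu\gamma + \tau\sqrt{s})\|Av\|_2}$.

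Finally I would assemble the pieces: substituting the $\|v_{\overline{S}}\|_1$ bound into $\|v\|_2 \leq \|v_S\|_2 + \|v_{\overline{S}}\|_1 \leq \mu\|Av\|_2 + \rbr{1 + \frac{\beta}{1-\delta}}\|v_{\overline{S}}\|_1$ and then using $\|Av\|_2 \leq 2\eta$ produces exactly the claimed bound, with the $\|x_{\overline{S}}\|_1$ coefficient $\frac{2}{1-\rho}\rbr{1 + \frac{\beta}{1-\delta}}$ and the $\eta$ coefficient $\frac{2(\mu\gamma + \tau\sqrt{s})}{1-\rho}\rbr{1 + \frac{\beta}{1-\delta}} + 2\mu$. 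The only routine care needed throughout is tracking real parts of complex inner products and keeping the coherence and certificate constants in the correct positions; the structural heart of the argument is the dual-certificate cancellation described in the previous paragraph.
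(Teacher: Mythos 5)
Your proof is correct and follows essentially the same route as the source: the paper does not prove this statement itself but imports it as Theorem 4.33 of \cite{foucart2017mathematical}, and your three-step argument---bounding $\|v_S\|_2$ via the invertibility of $A_S^*A_S$ and the coherence bound $\beta$, absorbing the self-referential $\|v_{\overline{S}}\|_1$ term exactly because $\rho = \theta + \frac{\beta\gamma}{1-\delta} < 1$, and only at the end inserting $\|Av\|_2 \le 2\eta$---is precisely the dual-certificate proof given there. All constants assemble exactly as claimed, so there is nothing to correct.
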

\begin{lemma}
\label{a_minus_i_norm}
Let $A \in \mathbb{C}^{n \times 2n}$, if $\|Ax\|^2_2 \leq (1 + \delta)\|x\|^2_2$ for all $x \in M_{k,t}$, then, $\|A^{*}_SA_S - I\|_{2\to 2}\leq  \delta$, for any $S \in S_{k,t}$.
\end{lemma}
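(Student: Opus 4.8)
The plan is to reduce the spectral-norm bound to a Rayleigh-quotient estimate and then pull that estimate back to the RIP hypothesis through a support extension. First I would note that $A_S^* A_S - I$ is Hermitian, so its spectral norm coincides with its spectral radius, giving
\begin{align*}
\|A_S^* A_S - I\|_{2 \to 2} = \max_{z \in \C^{|S|},\, z \neq 0} \frac{\abs{\abr{(A_S^* A_S - I)z, z}}}{\abr{z,z}} .
\end{align*}
Hence it suffices to prove $\abs{\abr{(A_S^* A_S - I)z, z}} \leq \delta\|z\|_2^2$ for every $z \in \C^{|S|}$.

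The key step is to identify $z \in \C^{|S|}$ with a $(k,t)$-sparse vector. Given $z$, I would let $\tilde{z} \in \C^{2n}$ be the vector whose entries on the positions indexed by $S$ are those of $z$ and whose remaining entries are zero. Since $S \in S_{k,t}$, this forces $\tilde{z} \in M_{k,t}$, and by construction $A_S z = A\tilde{z}$ together with $\|z\|_2 = \|\tilde{z}\|_2$. Expanding the inner product then yields
\begin{align*}
\abr{(A_S^* A_S - I)z, z} = \|A_S z\|_2^2 - \|z\|_2^2 = \|A\tilde{z}\|_2^2 - \|\tilde{z}\|_2^2 ,
\end{align*}
so the whole question is reduced to controlling $\|A\tilde{z}\|_2^2 - \|\tilde{z}\|_2^2$ for $\tilde{z} \in M_{k,t}$.

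From here the assumed bound $\|A\tilde{z}\|_2^2 \leq (1+\delta)\|\tilde{z}\|_2^2$ immediately gives the upper estimate $\abr{(A_S^* A_S - I)z, z} \leq \delta\|z\|_2^2$. I expect the main obstacle to be the reverse inequality: bounding the \emph{absolute value} of the Rayleigh quotient also requires $\|A\tilde{z}\|_2^2 \geq (1-\delta)\|\tilde{z}\|_2^2$, which is the lower RIP inequality and is not part of the hypothesis as literally written (indeed, the upper bound alone is satisfied by $A=0$ with $\delta=0$, for which the conclusion fails). This is exactly where I would invoke the full $M_{k,t}$-RIP of $A$ from Theorem \ref{RIPthm}, the setting in which the lemma is applied; with both inequalities in hand one gets $-\delta\|z\|_2^2 \leq \abr{(A_S^* A_S - I)z, z} \leq \delta\|z\|_2^2$, and maximizing over $z$ finishes the proof.
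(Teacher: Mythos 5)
Your argument is essentially the paper's own: restrict to vectors supported on an $S \in S_{k,t}$, rewrite $\|A_S z\|_2^2 - \|z\|_2^2$ as the quadratic form $\abr{(A_S^* A_S - I)z, z}$, and use the Hermitian structure of $A_S^* A_S - I$ to convert a Rayleigh-quotient bound into a spectral-norm bound. The difference is that you carry out the last step correctly, and the paper does not: its proof asserts
\begin{align*}
\|A^{*}_SA_S - I\|_{2\to 2} = \max_{x \in \C^S \setminus \{0\}} \frac{\abr{(A^{*}_SA_S - I)x, x}}{\|x\|^2_2} \leq \delta ,
\end{align*}
with no absolute value in the numerator. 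For a Hermitian matrix that maximum is only the largest eigenvalue $\lambda_{\max}$, whereas the spectral norm is $\max\cbr{\abs{\lambda_{\max}}, \abs{\lambda_{\min}}}$; the one-sided hypothesis $\|Ax\|_2^2 \leq (1+\delta)\|x\|_2^2$ controls $\lambda_{\max}(A_S^* A_S - I)$ from above but puts no floor under $\lambda_{\min}$. Your counterexample $A = 0$, $\delta = 0$ confirms that the lemma is false as literally stated, and your repair --- invoking the two-sided $M_{k,t}$-RIP of Theorem \ref{RIPthm}, which is available in the only place the lemma is used (the proof of Theorem \ref{main_result_BP_l0}) --- yields $\abs{\abr{(A_S^* A_S - I)z, z}} \leq \delta\|z\|_2^2$ for all $z$ and hence the stated conclusion. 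So your proposal follows the same route but is more careful than the paper's proof: it identifies and closes a genuine (though harmless in context) gap. The cleanest permanent fix is the one your analysis suggests: restate the lemma with the two-sided hypothesis $(1-\delta)\|x\|_2^2 \leq \|Ax\|_2^2 \leq (1+\delta)\|x\|_2^2$ for all $x \in M_{k,t}$.
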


\begin{proof}
Let $S \in S_{k,t}$ be given. Then for any $x \in \C^S$, we have
\begin{align}
\|A_Sx\|_2^2 - \|x\|_2^2 \leq \delta \|x\|_2^2 \nonumber 
\end{align}
 We can re-write this as : $\|A_Sx\|_2^2 - \|x\|_2^2 = \langle A_Sx, A_Sx\rangle - \langle x, x \rangle = \langle (A^{*}_SA_S - I)x, x \rangle $. Noting that $A^{*}_SA_S - I$ is Hermitian, we have:
\begin{align}
\|A^{*}_SA_S - I\|_{2\to 2} &= \max_{x \in \mathbb{C}^S \setminus \{0\}} \frac{\langle (A^{*}_SA_S - I)x, x \rangle}{\|x\|^2_2} \leq \delta \nonumber
\end{align}

\end{proof}

\begin{proof}[\textbf{Proof of Theorem \ref{main_result_BP_l0}}]
We will derive  \eqref{main_result_BP_l0_eq1} by showing that the matrix A satisfies all the hypotheses in Theorem \ref{four_three_three} for every vector in $M_{k,t}$. 

First note that by Theorem \ref{RIPthm},  $A$ satisfies the $M_{k,t}$-$RIP$ property with constant $\delta_{k,t} := \sqrt{\frac{ckt}{n}}$. Therefore, by Lemma \ref{a_minus_i_norm},  for any $S \in S_{k,t}$, we have $\|A^{*}_{S}A_{S}-I\|_{2 \to 2} \leq \delta_{k,t}$. Since $A^{*}_SA_S$ is a positive semi-definite matrix, it has only non-negative eigenvalues that lie in the range $[1 - \delta_{k,t}, 1 + \delta_{k,t}]$. Since $\delta_{k,t} < 1$ by assumption, $A^{*}_SA_S$ is injective. Thus, we can set: $h = A_S(A^{*}_SA_S)^{-1}\text{sgn}(x_S)$ and get:
\begin{align*}
    \|h\|_2 = \|A_S(A^{*}_SA_S)^{-1}\text{sgn}(x_S)\|_2\leq \|A_S\|_{2 \to 2}\|(A^{*}_SA_S)^{-1}\|_{2 \to 2}\|\text{sgn}(x_S)\|_2 \leq \tau\sqrt{k + t}
\end{align*}

where $\tau = \frac{\sqrt{1 + \delta_{k,t}}}{1 - \delta_{k,t}}$  and we  have used the following facts: since $\|A^{*}_{S}A_{S}-I\|_{2 \to 2} \leq \delta_{k,t}< 1$, we get that $\|(A^{*}_SA_S)^{-1}\|_{2 \to 2}\leq \frac{1}{1 - \delta_{k,t}}$ and that the largest singular value of $A_S$ is less than $\sqrt{1 + \delta_{k,t}}$. Now let $u = A^{*}h$, then $\|u_{S} - \text{sgn}(x_S)\|_2 = 0$. Now we need to bound the value $\|u_{\overline{S}}\|_{\infty}$. Denoting row $j$ of $A^{*}_{\overline{S}}A_S$ by the vector $v_j$, we see that it has at most $\max\{k,t\}$ non-zero entries and that $|(v_j)_l|^2 \leq  \frac{c}{n}$ for $l = 1, \dots, (k+t)$. Therefore, for any element $(u_{\overline{S}})_j$, we have:
\begin{align*}
|(u_{\overline{S}})_j| = |\langle (A^{*}_SA_S)^{-1}\text{sgn}(x_S), (v_j)^{*} \rangle | \leq \|(A^{*}_SA_S)^{-1}\|_{2 \to 2}\|\text{sgn}(x_S)\|_2 \|v_j\|_2 \leq \frac{\sqrt{k+t}}{1 -\delta_{k,t}}\sqrt{\frac{\max\{k,t\}c}{n}}
\end{align*}
Defining $\beta : =\sqrt{\frac{\max\{k,t\}c}{n}}$ and $\theta := \frac{\sqrt{k+t}}{1 - \delta_{k,t}}\beta$, we get $\|u_{\overline{S}}\|_{\infty} \leq \theta < 1$ and also  observe that  $ \max_{l \in \overline{S}}\|A^{*}_Sa_l\|_2 \leq \beta$. Therefore, all the hypotheses of Theorem \ref{four_three_three} have been satisfied. Note that $y = F\hat{x} + e = A[\hat{x}_{h(k)}\enspace e]^T + F\hat{x}_{t(k)}$, Therefore, setting  $x^{\#} = \text{BP}(y,A,\|\hat{x}_{t(k)}\|_2)$, we use the fact $\|F\hat{x}_{t(k)}\|_2 = \|\hat{x}_{t(k)}\|_2$ combined with the bound in Theorem \ref{four_three_three} to get \eqref{main_result_BP_l0_eq1}:
\begin{align}
    \|\hat{x}^{\#} - \hat{x}_{h(k)}\|_2 \leq \left(\frac{2\tau \sqrt{k+t}}{1-\theta} \left(1 + \frac{\beta}{1 - \delta_{k,t}} \right) + 2 \tau \right)\|\hat{x}_{t(k)}\|_2 \nonumber 
\end{align} 

where we write $x^{\#} = [\hat{x}^{\#} \enspace e^{\#} ]^T$ with $\hat{x}^{\#}, e^{\#} \in \mathbb{C}^{n}$.
\end{proof}

 We now focus on proving Theorem \ref{main_result_BP_l2}. In order to do so, we will need some lemmas that will be used in the main proof. 

\begin{lemma}
\label{l2impliesl1forS}
If a matrix $A \in \mathbb{C}^{m \times N}$ satisfies the $\ell_2$ robust null space property for $S \subset [N|$, with card$(S)=s$, then it satisfies the $\ell_1$ robust null space property for $S$ with constants $0<\rho<1, \tau':=\tau \sqrt{s} > 0$.
\end{lemma}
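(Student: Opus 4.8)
The plan is to deduce the $\ell_1$ property directly from the $\ell_2$ property by applying the elementary norm comparison $\|w\|_1 \le \sqrt{s}\,\|w\|_2$, valid for any vector $w$ supported on a set of size at most $s$. So I fix the set $S$ (with $\card(S) = s$) for which the $\ell_2$ robust null space property is assumed to hold, take an arbitrary $v \in \C^N$, and aim to bound $\|v_S\|_1$ in the form required by the $\ell_1$ (i.e. ordinary) robust null space property.

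First I would write down what the hypothesis gives for this $S$ and $v$. Taking $q = 2$ in the definition of the $\ell_q$ robust null space property, the factor $s^{1-1/q}$ becomes $s^{1/2} = \sqrt{s}$, so the assumed inequality reads
\begin{align*}
\|v_S\|_2 \le \frac{\rho}{\sqrt{s}}\,\|v_{\overline{S}}\|_1 + \tau\,\|Av\|.
\end{align*}
Next I would pass from the $\ell_2$ norm of $v_S$ to its $\ell_1$ norm. Since $v_S$ is supported on $S$ and $|S| = s$, the Cauchy--Schwarz inequality applied to $|v_S|$ against the indicator of $S$ yields $\|v_S\|_1 \le \sqrt{s}\,\|v_S\|_2$. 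Chaining this with the displayed hypothesis gives
\begin{align*}
\|v_S\|_1 \le \sqrt{s}\,\|v_S\|_2 \le \rho\,\|v_{\overline{S}}\|_1 + \tau\sqrt{s}\,\|Av\|,
\end{align*}
where the crucial point is that the $\sqrt{s}$ introduced by Cauchy--Schwarz exactly cancels the $1/\sqrt{s}$ sitting in front of $\|v_{\overline{S}}\|_1$ in the $\ell_2$ bound. Consequently the constant $\rho$ is preserved unchanged, while $\tau$ is scaled to $\tau' = \tau\sqrt{s}$. Since $v$ was arbitrary, this is precisely the $\ell_1$ robust null space property for $S$ with constants $\rho$ and $\tau'$.

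There is no substantive obstacle here; the content is a single application of the standard $\ell_1$--$\ell_2$ inequality on the $s$-dimensional support. The only things to watch are bookkeeping: matching the exponent $s^{1-1/q}$ in the definition of the $\ell_2$ property (which equals $\sqrt{s}$ precisely because $q=2$) so that it cancels cleanly against the Cauchy--Schwarz factor, and confirming that the resulting constants still satisfy the requirements of the $\ell_1$ property, namely $0 < \rho < 1$ (inherited verbatim from the hypothesis) and $\tau' = \tau\sqrt{s} > 0$.
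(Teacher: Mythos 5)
Your proof is correct and matches the paper's argument exactly: both start from the $\ell_2$ robust null space property inequality $\|v_S\|_2 \leq \frac{\rho}{\sqrt{s}}\|v_{\overline{S}}\|_1 + \tau\|Av\|$ and apply the support-size bound $\|v_S\|_1 \leq \sqrt{s}\,\|v_S\|_2$ so that the $\sqrt{s}$ factors cancel in front of $\|v_{\overline{S}}\|_1$. Your write-up merely adds the explicit Cauchy--Schwarz justification for that norm comparison, which the paper takes as standard.
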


\begin{proof}
For any $v \in \mathbb{C}^{N}$, $\|v_{S}\|_2 \leq \frac{\rho}{\sqrt{s}}\|v_{\bar{S}}\|_1 + \tau \|Av\|$. Then, using the fact that $\|v_{S}\|_1 \leq \sqrt{s}\|v_{S}\|_2$, we get:$\|v_{S}\|_1 \leq \rho\|v_{\bar{S}}\|_1 + \tau \sqrt{s} \|Av\|$.

\end{proof}

\begin{lemma}[Theorem 4.20 in \cite{foucart2017mathematical}]
\label{l1impliesdiff}
If a matrix $A \in \mathbb{C}^{m \times N}$ satisfies the $\ell_1$ robust null space property (with respect to $\|.\|$) and for $0 < \rho < 1$ and $\tau > 0$ for $S \subset [N|$, then:
\begin{align}
    \|z - x\|_1 \leq \frac{1 + \rho}{1 - \rho} (\|z\|_1 - \|x\|_1 + 2\|x_{\bar{S}}\|_1) + \frac{2\tau}{1 - \rho}\|A(z -x)\| \nonumber
\end{align}
for all $z,x \in \mathbb{C}^{N}$.
\end{lemma}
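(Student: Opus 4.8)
The plan is to set $v = z - x$ and bound $\|v\|_1 = \|z - x\|_1$ in two stages, applying the $\ell_1$ robust null space property twice and inserting one reverse–triangle–inequality step to manufacture the $\|z\|_1 - \|x\|_1$ combination that appears in the conclusion. Throughout, the hypothesis $0 < \rho < 1$ is what guarantees that the eventual division by $1 - \rho$ is legitimate.

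First I would split $\|v\|_1 = \|v_S\|_1 + \|v_{\overline{S}}\|_1$ and invoke the robust null space property $\|v_S\|_1 \leq \rho\|v_{\overline{S}}\|_1 + \tau\|Av\|$ to obtain the preliminary bound $\|v\|_1 \leq (1+\rho)\|v_{\overline{S}}\|_1 + \tau\|Av\|$. This reduces the whole problem to controlling $\|v_{\overline{S}}\|_1$ in terms of the right-hand-side quantities.

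Next I would estimate $\|v_{\overline{S}}\|_1$. The key manipulation is to apply the reverse triangle inequality to $\|z\|_1 = \|z_S\|_1 + \|z_{\overline{S}}\|_1$ after writing $z_S = x_S + v_S$ and $z_{\overline{S}} = x_{\overline{S}} + v_{\overline{S}}$; this yields $\|z\|_1 \geq (\|x_S\|_1 - \|v_S\|_1) + (\|v_{\overline{S}}\|_1 - \|x_{\overline{S}}\|_1)$. Substituting the identity $\|x_S\|_1 = \|x\|_1 - \|x_{\overline{S}}\|_1$ and rearranging gives $\|v_{\overline{S}}\|_1 \leq \|z\|_1 - \|x\|_1 + 2\|x_{\overline{S}}\|_1 + \|v_S\|_1$. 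I would then apply the robust null space property a second time to the residual $\|v_S\|_1$ and solve for $\|v_{\overline{S}}\|_1$, arriving at $\|v_{\overline{S}}\|_1 \leq \tfrac{1}{1-\rho}\left(\|z\|_1 - \|x\|_1 + 2\|x_{\overline{S}}\|_1\right) + \tfrac{\tau}{1-\rho}\|Av\|$.

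Finally I would substitute this back into the preliminary bound and collect the two $\|Av\|$ contributions, using $\tfrac{(1+\rho)\tau}{1-\rho} + \tau = \tfrac{2\tau}{1-\rho}$, which recovers the claimed inequality once $\|Av\|$ is rewritten as $\|A(z-x)\|$. I expect the only genuinely delicate step to be the middle one: extracting $\|z\|_1 - \|x\|_1 + 2\|x_{\overline{S}}\|_1$ from the reverse triangle inequality and re-expressing $\|x_S\|_1$ via $\|x\|_1$. The two applications of the null space property and the constant bookkeeping at the end are routine.
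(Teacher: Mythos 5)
Your proof is correct and is essentially the standard argument behind Theorem 4.20 of \cite{foucart2017mathematical}, which the paper simply cites for this lemma without reproducing a proof: split $\|v\|_1 = \|v_S\|_1 + \|v_{\overline{S}}\|_1$, apply the robust null space property twice (once for the preliminary bound, once on the residual $\|v_S\|_1$ after the reverse-triangle-inequality step), and collect constants via $\frac{(1+\rho)\tau}{1-\rho} + \tau = \frac{2\tau}{1-\rho}$. All steps and the constant bookkeeping check out, so there is nothing to correct.
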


\begin{lemma}[Proposition 2.3 in \cite{foucart2017mathematical}]
\label{lperror}
For any $p > q > 0$ and $x \in \mathbb{C}^{n}$, 
\begin{align}
     \inf_{z \in M_{k}} \|x-z\|_p \leq \frac{1}{(k)^{\frac{1}{q} - \frac{1}{p}}}\|x\|_q \nonumber 
\end{align}
\end{lemma}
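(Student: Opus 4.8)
The plan is to first reduce the infimum to an explicit tail norm. Among all $k$-sparse vectors $z \in M_k$, the closest one to $x$ in any $\ell_p$ norm is the best $k$-term approximation $x_{h(k)}$: for a fixed support $T$ with $\card(T)\leq k$, the distance $\|x-z\|_p^p = \sum_{i\in T}|x_i-z_i|^p + \sum_{i\notin T}|x_i|^p$ is minimized by taking $z_i=x_i$ on $T$, and the residual $\sum_{i\notin T}|x_i|^p$ is then minimized over supports $T$ by retaining the $k$ largest entries. Hence $\inf_{z\in M_k}\|x-z\|_p = \|x - x_{h(k)}\|_p = \|x_{t(k)}\|_p$, and it suffices to bound this tail norm by $k^{-(1/q-1/p)}\|x\|_q$.

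Next I would pass to the nonincreasing rearrangement of the magnitudes, writing $|x_1^*|\geq |x_2^*|\geq\cdots\geq |x_n^*|$ for the absolute values of the entries of $x$ sorted in decreasing order. Then $\|x_{t(k)}\|_p^p = \sum_{j>k}|x_j^*|^p$ while $\|x\|_q^q = \sum_{j=1}^n |x_j^*|^q$. The key pointwise estimate is a uniform bound on the tail entries: for any $j\geq k+1$, monotonicity gives $|x_j^*|\leq |x_i^*|$ for every $i\leq k$, so $k\,|x_{k+1}^*|^q \leq \sum_{i=1}^k |x_i^*|^q \leq \|x\|_q^q$, whence $|x_j^*|\leq |x_{k+1}^*|\leq k^{-1/q}\|x\|_q$ for all $j>k$.

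Finally I would split each tail term using $p>q$, writing $|x_j^*|^p = |x_j^*|^{p-q}\,|x_j^*|^q$ and bounding the first factor by the uniform estimate above. Summing over $j>k$ and using $\sum_{j>k}|x_j^*|^q \leq \|x\|_q^q$ yields $\|x_{t(k)}\|_p^p \leq \bigl(k^{-1/q}\|x\|_q\bigr)^{p-q}\|x\|_q^q = k^{-(p-q)/q}\|x\|_q^p$. Taking $p$-th roots and using the identity $(p-q)/(pq) = 1/q - 1/p$ gives the claimed bound.

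The obstacle here is modest: it concentrates in the tail-entry estimate and the subsequent exponent bookkeeping, where the hypothesis $p>q$ is used crucially so that $p-q>0$ and the uniform bound on $|x_j^*|^{p-q}$ points in the right direction. The only other point requiring a moment of care is justifying that $x_{h(k)}$ genuinely attains the infimum in the $\ell_p$ distance, but this is the standard best-$k$-term argument sketched in the first paragraph.
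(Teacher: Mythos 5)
Your proof is correct: the reduction of the infimum to $\|x_{t(k)}\|_p$, the bound $|x_{k+1}^*|\leq k^{-1/q}\|x\|_q$ via the $k$ largest entries, and the exponent split $|x_j^*|^p = |x_j^*|^{p-q}|x_j^*|^q$ (using $p>q$) assemble into a complete argument, with the exponent bookkeeping $(p-q)/(pq)=1/q-1/p$ done correctly. The paper itself gives no proof of this lemma---it is quoted as Proposition 2.3 of \cite{foucart2017mathematical}---and your argument is essentially the standard proof appearing in that reference, so there is nothing to bridge.
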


\begin{proof}[\textbf{Proof of Theorem \ref{main_result_BP_l2}}]
Let $0 < \rho < 1$ be arbitrary. 
Since $F$ is a unitary matrix, for any $S \subseteq [n]$ and $v \in \mathbb{C}^{n}$,  we have 
\begin{align}
    \|v_S\|_2 &\leq \frac{\rho}{\sqrt{k}}\|v_{\overline{S}}\|_1 + \tau\|v\|_2 = \frac{\rho}{\sqrt{k}}\|v_{\overline{S}}\|_1 + \tau\|Fv\|_2 \label{eq_l2_ineq}
\end{align}
where $\tau = 1$. Therefore, $F$ satisfies the $\ell_2$ robust null space property for all $S \subseteq [n]$ with card$(S) \leq k$. Next, using Lemma \ref{l2impliesl1forS} we get  $\|v_{S}\|_1 \leq \rho\|v_{\bar{S}}\|_1 + \tau \sqrt{k} \|Fv\|_2$ for all $v \in \mathbb{C}^{n}$. Now let  $x^{\#} = \text{BP}(y,F,\eta)$, then we know $\|x^{\#}\|_1 \leq \|x\|_1$, where $x$ is $k$-sparse. Then letting $S \subseteq [n]$ be the support of $x$ and using the fact that $\|x_{\overline{S}}\|_2 = 0$ and Lemma  \ref{l1impliesdiff} , we get:

\begin{align}
    \|x^{\#} - x\|_1 &\leq \frac{1 + \rho}{1 - \rho} (\|x^{\#}\|_1 - \|x\|_1 + 2\|x_{\bar{S}}\|_1) + \frac{2\tau\sqrt{k}}{1 - \rho}\|F(x^{\#} -x)\|_2\nonumber \\
    &\leq  \frac{2\tau\sqrt{k}}{1 - \rho}\|F(x^{\#} -x)\|_2 \nonumber \leq \frac{4\tau\sqrt{k}}{1-\rho}\|e\|_2 \nonumber \leq \frac{4\tau\sqrt{k}}{1-\rho}\eta \nonumber
\end{align}
Letting $\rho \to 0$ and recalling that $\tau=1$ gives \eqref{main_result_BP_l2_eq1}. Now let $S$ be the support of the $k$ largest entries in $x^{\#} - x$. 
Note $\|(x^{\#} - x)_{\overline{S}}\|_2 = \inf_{z \in M_{k}} \|(x^{\#} - x)-z\|_2$. 
Then, using Lemma \ref{lperror} and \eqref{eq_l2_ineq}, we see that 
\begin{align}
        \|x^{\#} - x\|_2 &\leq \|(x^{\#} - x)_{\overline{S}}\|_2 + \|(x^{\#} - x)_{S}\|_2 \nonumber \\
        &\leq \frac{1}{\sqrt{k}}\|(x^{\#} - x)\|_1 +  \frac{\rho}{\sqrt{k}}\|(x^{\#} - x)_{\overline{S}}\|_1 + \tau\|F(x^{\#} - x)\|_2 \nonumber\\
        &\leq \frac{1 + \rho}{\sqrt{k}}\|(x^{\#} - x)\|_1 + 2\tau\eta \leq \frac{4\tau(1+\rho)}{(1-\rho)}\eta + 2\tau\eta = \left(\frac{4\tau(1+\rho)}{(1-\rho)}+ 2\tau\right)\eta \nonumber
\end{align}
Recalling $\tau = 1$ and letting $\rho \to 0$ gives the desired result. 
\end{proof}

\section{Dantzig Selector}
\label{ds}

Next we introduce the Dantzig Selector algorithm with an additional constraint. We first prove its recovery guarantees for $\ell_\infty$-norm and then explain the reasoning behind the additional constraint.

\begin{algorithm}
    \caption{Modified Dantzig Selector}
    \label{DS}
     \textbf{Input:} The observed vector $y \in \mathbb{C}^{m}$, where $y = A\hat{x} + e$, the measurement matrix $A \in \mathbb{C}^{m \times N}$, and constants $\eta_1, \eta_2$ such that $\|e\|_\infty \leq \eta_1, \|A^*e\|_\infty \leq \eta_2$ \\
    \textbf{Output:} $x^{\#} \in \C^{N}$
    \begin{algorithmic}[1] 
        \Procedure{DS}{$y,A,\eta_1, \eta_2$} 
            \State $x^{\#} \gets  \operatorname{arg\,min}_{z \in \mathbb{C}^{N}}\|z\|_1 \enspace \text{subject to} \enspace  \|A^*(y - Az)\|_\infty \leq \eta_2, \quad \|Az - y\|_\infty \leq \eta_1$
            \State \textbf{return} $x^{\#}$
        \EndProcedure
    \end{algorithmic}
\end{algorithm}

\begin{proof}[\textbf{Proof of Theorem \ref{main_result_BP_li}}]
The proof follows the same structure as the proof of Theorem \ref{main_result_BP_l2}. Therefore we provide a sketch and leave out the complete derivation. Let $0 < \rho < 1$ be arbitrary. 
Since $F$ is a unitary matrix, for any $S \subseteq [n]$ and $v \in \mathbb{C}^{n}$,  we have 
\begin{align}
    \|v_S\|_2 &\leq \frac{\rho}{\sqrt{k}}\|v_{\overline{S}}\|_1 + \|v_S\|_2 \leq \frac{\rho}{\sqrt{k}}\|v_{\overline{S}}\|_1 + \sqrt{k}\|v\|_\infty  = \frac{\rho}{\sqrt{k}}\|v_{\overline{S}}\|_1 + \sqrt{k}\|F^{*}Fv\|_\infty \nonumber
\end{align}
The rest of the argument is the same as in the proof of Theorem \ref{main_result_BP_l2}.
\end{proof}

\subsection{Additional Constraint on DS}
\label{ds_constraint}
We now comment on the additional constraint $\|Az - y\|_\infty \leq \eta_1$ in the statement of Algorithm \ref{DS}. Observe that the constraint is: $|(Az)_i - (A\hat{x})_i - e_i| \leq \eta_1$ for any $i \in [n]$. Since $\|e\|_\infty \leq \eta _1$, the constraint implies $|(Az)_i - (A\hat{x})_i| \leq 2 \eta_1$. Now suppose that our input is an image $x = F^{*}\hat{x}$,  where $F$ is the Discrete Fourier Transform matrix. Setting $A = F^{*}$,  the constraint ensures that the entries of a reconstructed image $F^{*}z$ and the original image $F^{*}\hat{x}$ will be close. That is, the constraint limits the feasible set of solutions to those that will lead to better reconstruction quality. This is particularly useful when we need to utilize the reconstructed images for some downstream task such as classification by a neural network. On the other hand suppose that we do not add this constraint. Then, since $\|F^{*}e\|_\infty \leq \sqrt{n}\eta_1$ grows with the dimension of the ambient space, there is no guarantee that the reconstruction will be close to the original image, especially when $\sqrt{n}\eta_1$ is large.

\section{Experiments}
\label{experiments}
We first analyze how our recovery guarantees perform in practice (Section \ref{recovery_error}) and then show  that  CRD can be used to defend neural networks against $\ell_0$-norm attacks (Section \ref{l0_exp}), $\ell_2$-norm attacks (Section \ref{sub_l2}) as well as $\ell_\infty$-norm attacks (Section \ref{sub_li}). 

 All of our experiments are conducted on  CIFAR-10 \cite{krizhevsky2009learning}, MNIST \cite{lecun1998mnist}, and  Fashion-MNIST \cite{xiao2017fashion} datasets with pixel values of each image normalized to lie in $[0,1]$. For every experiment, we use the Discrete Cosine Transform (DCT) and the Inverse Discrete Cosine Transform (IDCT)  denoted by the matrices $F \in \mathbb{R}^{n \times n}$ and $F^T \in \mathbb{R}^{n \times n}$ respectively. That is, for an adversarial image $y \in \mathbb{R}^{\sqrt{n} \times \sqrt{n}}$, such that, $y = x + e$, we let $\hat{x} = Fx$, and $x =F^{T}\hat{x}$,  where $x,\hat{x} \in \mathbb{R}^{n}$ and $e \in \mathbb{R}^{n}$ is the noise vector. For an adversarial image $y \in \mathbb{R}^{\sqrt{n} \times \sqrt{n} \times c}$, that contains $c$ channels, we perform recovery on each channel independently by considering $y_m = x_m + e_m$, where  $\hat{x}_{m} = F x_{m}, x_m = F^{T}\hat{x}_m$ for $m = 1,\dots, c$. The value $k$ denotes the number of largest (in absolute value) DCT co-efficients used for reconstruction of each channel, and the value $t$ denotes the $\ell_0$ noise budget for each channel.

\begin{table*}[t]
\begin{center}
\begin{tabular}{|c|c|c|}
\hline
{\bf Layer} & {\bf Type}  & {\bf Properties} \\
\hline
 1 & Convolution & 32 channels, $3 \times 3$ Kernel, No padding \\
 \hline
 2 & Convolution & 64 channels, $3 \times 3$ Kernel, No padding, Dropout with $p=0.5$ \\
 \hline
 3 & Max-pooling &  $2 \times 2$, Dropout with $p=0.5$ \\
 \hline
 4 & Fully Connected & $128$ neurons, Dropout with $p=0.5$\\
 \hline
 5 & Fully Connected & $10$ neurons \\
\hline
\end{tabular}
\end{center}
\caption{Network architecture used for MNIST and Fashion-MNIST datasets in Section \ref{l0_exp} and Section \ref{l2_exp}. The first four layers use ReLU activations while the last layer uses a softmax activation.}
\label{tab:net_arch}
\end{table*}

\begin{table*}[t]
\begin{center}
\begin{tabular}{|c|c|c|c|c|c|}
\hline
{\bf Dataset} & {\bf $t_{\text{avg}}$}  & {\bf $\delta_{\ell_{\infty}}$} & {\bf $\delta_{\ell_{2}}$} & {\bf $\Delta_{\ell_{\infty}}$} & {\bf $\Delta_{\ell_{2}}$}  \\
\hline
 CIFAR-10 & 1.52 & 0.19 & 0.27 & 5.38 & 17.34\\
 \hline
 MNIST & 1.47 & 0.14 & 0.19 & 3.46 & 9.99 \\
 \hline
 Fashion-MNIST & 1.50 & 0.10 & 0.14 & 2.87 & 8.27\\
\hline
\end{tabular}
\end{center}
\caption{Recovery performance of Algorithm \ref{IHT} on $\ell_0$-norm bounded noise.}
\label{tab:l0_recovery_iht}
\end{table*}

\begin{table*}[t]
\begin{center}
\begin{tabular}{|c|c|c|c|}
\hline
{\bf Dataset} & {\bf $t_{\text{avg}}$}  & {\bf $\delta_{\ell_{2}}$} &  {\bf $\Delta_{\ell_{2}}$}  \\
\hline
 CIFAR-10 & 4.03 & 20.10 & 866.07  \\
 \hline
 MNIST & 4.01 & 5.08 & 381.33\\
 \hline
 Fashion-MNIST & 4.02 & 6.84 & 298.73\\
\hline
\end{tabular}
\end{center}
\caption{Recovery performance of Algorithm \ref{BP} for $\ell_0$-norm bounded noise.}
\label{tab:l0_recovery_bp}
\end{table*}

We now outline the neural network architectures used for experiments in Section \ref{l0_exp} and \ref{l2_exp}. For CIFAR-10, we use the network architecture of \cite{he2016deep} while the network architecture for MNIST and Fashion-MNIST datasets is provided in Table \ref{tab:net_arch}. We train our networks using the Adam optimizer for CIFAR-10 and the AdaDelta optimizer for MNIST and Fashion-MNIST. In both cases, we use a cross-entropy loss function. We implement the following training procedure: for every training image $x$, we first generate $\hat{x}_{h(k)} = (Fx)_{h(k)}$, and then reconstruct the image $x' = F^{T}\hat{x}_{h(k)}$. We then use both $x$ and $x'$ to train the network. For instance, in MNIST we get $60000$ original training images and $60000$ reconstructed training images, for a total of $120000$ training images. The code to reproduce our experiments is available here: \url{https://github.com/jasjeetIM/recovering_compressible_signals}.

\begin{table*}[t]
\begin{center}
\begin{tabular}{|c|c|c|c|c|c|}
\hline
{\bf Dataset} & {\bf $\ell_{2_{\text{avg}}}$}  & {\bf $\delta_{\ell_{1}}$} & {\bf $\delta_{\ell_{2}}$} & {\bf $\Delta_{\ell_{1}}$} & {\bf $\Delta_{\ell_{2}}$}  \\
\hline
 CIFAR-10 & 32.03 & 88.56 & 17.45 & 2015.47 & 315.22 \\
 \hline
 MNIST & 16.15 & 43.87 & 8.20 & 365.17 & 88.80 \\
 \hline
 Fashion-MNIST & 16.16 & 40.90 & 9.32 & 367.94 & 87.64\\
\hline
\end{tabular}
\end{center}
\caption{Recovery performance of Algorithm \ref{BP} for $\ell_2$-norm bounded noise.}
\label{tab:l2_recovery_bp}
\end{table*}

\begin{table*}[t]
\begin{center}
\begin{tabular}{|c|c|c|c|c|c|}
\hline
{\bf Dataset} & {\bf $\ell_{\infty_{\text{avg}}}$}  & {\bf $\delta_{\ell_{1}}$} & {\bf $\delta_{\ell_{2}}$} & {\bf $\Delta_{\ell_{1}}$} & {\bf $\Delta_{\ell_{2}}$}  \\
\hline
 CIFAR-10 & 0.99 &  10821.49& 1435.45 & 4538.50 &  2207.48 \\
 \hline
 MNIST & 0.99 & 1519.27 & 213.23 & 2960.72 & 849.28 \\
 \hline
 Fashion-MNIST & 0.99 & 1824.30 & 274.21 & 2655.69 & 788.30\\
\hline
\end{tabular}
\end{center}
\caption{Recovery performance of Algorithm \ref{DS} for $\ell_\infty$-norm bounded noise.}
\label{tab:li_recovery_ds}
\end{table*}

\begin{figure*}[thb]
\begin{center}
\begin{tabular}{c c}
Original & {\includegraphics [valign=m,width = 4.5in, height=.3in]{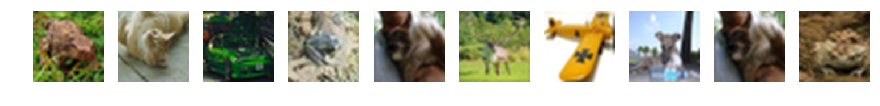}} \\
OPA & {\includegraphics[valign=m,width = 4.5in, height=.3in]{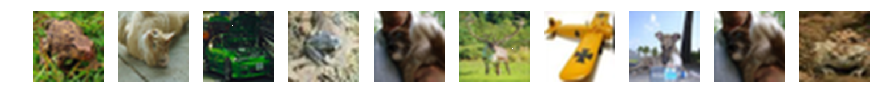} }
\end{tabular}
\end{center}
\caption{Adversarial images for CIFAR-10 created using OPA. The original images are shown in the first row, and the adversarial images are shown in the second row. }
\label{fig:adv_l0_op}
\end{figure*}

\subsection{Recovery Error}
\label{recovery_error}
\begin{table*}[t]
\begin{center}
\begin{tabular}{|c|c|c|}
\hline
 {\bf Orig. Acc.}  & {\bf OPA. Acc } & {\bf IHT. Acc.}  \\
\hline
 77.4\% & 0.0\% & 71.8\%  \\
 \hline
\end{tabular}
\end{center}
\caption{Effectiveness of CRD against OPA. The first column lists the accuracy of the network on original images and the OPA Acc. columns shows the network's accuracy on adversarial images. The  IHT. Acc. column shows the accuracy of the network on images reconstructed using Algorithm \ref{IHT}.}
\label{tab:op_recovery}
\end{table*}
Since recovery guarantees have been proved theoretically, our aim is to examine how close the recovery error is to the upper bound in practice. Each experiment is conducted on a subset of  $500$ data points sampled uniformly at random from the respective dataset. The first metric we report is $\delta_{\ell_{p}}:= \frac{1}{500}\sum_{i=1}^{500}\|x^{\#}_{i} - (\hat{x}_{i})_{h(k)}\|_p$, where $x^{\#}_{i}$  is the recovered vector for the noisy measurement $y_i$, $(\hat{x}_{i})_{h(k)} = (Fx_i)_{h(k)}$ and the average is taken over the $500$ points sampled from the dataset. This measures the average magnitude of the recovery error for the respective algorithm  in $\ell_p$ norm. In order to relate this value to the upper bound on the recovery error, we also report $\Delta_{\ell_{p}}:= \frac{1}{500}\sum_{i=1}^{500}(\Upsilon_i - \|x^{\#}_{i} - (\hat{x}_{i})_{h(k)}\|_p)$, where $\Upsilon_i$ is the guaranteed upper bound  for $y_i$. Using $\delta_{\ell_{p}}$ and $\Delta_{\ell_{p}}$, we aim to capture how much smaller the recovery error is than the upper bound for these datasets.

\subsubsection{$\ell_0$ noise}
\label{sub_l0}
\begin{figure*}[thb]
\begin{center}
\begin{tabular}{c c}
Original & {\includegraphics [valign=m,width = 4.5in, height=.3in]{images/reg-cifar10_big-op.png}} \\
OPA-Rec. & {\includegraphics[valign=m,width = 4.5in, height=.3in]{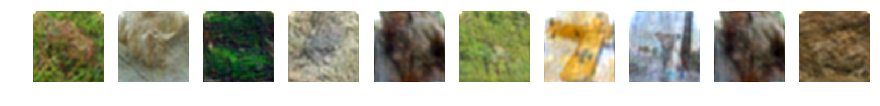} }
\end{tabular}
\end{center}
\caption{Reconstruction quality of images using Algorithm \ref{IHT}. The first row shows the original images while the second row shows reconstruction from the largest $275$ DCT co-efficients recovered using Algorithm \ref{IHT}.   }
\label{fig:adv_l0_op_recon}
\end{figure*}
For each data point $x_i \in \mathbb{R}^{n}, i = 1,2, \dots, 500$, we construct a noise vector $e_i \in \mathbb{R}^{n}$ as follows: we first sample an integer $t_i$ from a uniform distribution over the set $\{1,\dots,t\}$, where $t$ is the allowed $\ell_0$ noise budget. Next, we select an index set $S_{t_{i}} \subset [n]$ uniformly at random, such that card($S_{t_{i}}) = t_i$. Then for each $j \in S_{t_{i}}$, we set $(e_{i})_{j} = c_j$, where $c_j$ is sampled from the uniform distribution on $[0,1)$ and  $(e_i)_l = 0$ for  $l \notin S_{t_i}$ We then set $y_i = x_i + e_i$ as the observed noisy vector.  We report $\delta_{\ell_{\infty}}, \delta_{\ell_2}, \Delta_{\ell_\infty}, \Delta_{l_{2}}$ and $t_{\text{avg}} := \frac{1}{500}\sum_{i=1}^{500}t_i$.\\

\noindent \textbf{Recovery with Algorithm \ref{IHT}}\\
We examine how \eqref{main_result_IHT eq2} and \eqref{main_result_IHT eq3} perform in practice. To do so, we set $k=4$ for MNIST and Fashion-MNIST and are allowed an $\ell_0$ noise budget of $t=3$. For CIFAR-10, we set $k=5$ and are allowed a noise budget of $t=3$.  We note that $k$ values have been chosen to meet our computational constraints. As such, any other values that fit the hypotheses  of \eqref{main_result_IHT eq2} and \eqref{main_result_IHT eq3} would work just as well. The results in Table \ref{tab:l0_recovery_iht} show that on average, the recovery error is well below the upper bounds.

\noindent \textbf{Recovery with Algorithm \ref{BP}}\\
We implement Algorithm \ref{BP} using the open source library CVXPY \cite{cvxpy}. We set $k=8$ for MNIST and Fashion-MNIST and are allowed an $\ell_0$ noise budget of $t=8$. For CIFAR-10, we set $k=10$ and are allowed a noise budget of $t=8$. We observe the results in Table \ref{tab:l0_recovery_bp} and note once again that the recovery error is well below the upper bound of \eqref{main_result_BP_l0_eq1}.

\subsubsection{$\ell_2$ noise}
\label{sub_l2}
 Now  we consider the case when the noise vector $e_i, i = 1,2, \dots 500$ is only bounded in $\ell_2$-norm.  For each $e_i, i = 1,2 \dots 500$, we set $(e_i)_j = c_j$, where $c_j$ is sampled from the uniform distribution on $[0,1)$. Since there is no restriction on how small $k$  needs to be, we set $k=75$ for CIFAR-10 and $k=40$ for MNIST and Fashion-MNIST.  We report $\delta_{\ell_{1}}, \delta_{\ell_2}, \Delta_{\ell_1}, \Delta_{l_{2}}$ and since the noise budget is in $\ell_2$-norm, we also report $\ell_{2_{\text{avg}}} := \sum_{i=1}^{500}\|e_i\|_2$. The results are shown in Table \ref{tab:l2_recovery_bp}. As was the case in the Section \ref{sub_l0}, the recovery error is well below the upper bounds of \eqref{main_result_BP_l2_eq1} and \eqref{main_result_BP_l2_eq2}.

\subsubsection{$\ell_\infty$ noise}
\label{sub_li}
We follow the same procedure as in Section \ref{sub_l2} to construct noise vectors $e_i, i=1,2, \dots,500$. This ensures that $\|e\|_\infty \leq 1$. We also select the value of $k$ as done in Section \ref{sub_l2} as well. We report $\delta_{\ell_{1}}, \delta_{\ell_2}, \Delta_{\ell_1}, \Delta_{l_{2}}$ as well as $\ell_{\infty_{\text{avg}}} := \sum_{i=1}^{500}\|e_i\|_\infty$. Once again recovery error is well below the upper bounds of \eqref{main_result_BP_li_eq1} and \eqref{main_result_BP_li_eq2} as shown in Table \ref{tab:li_recovery_ds}.

\subsection{Defense against $\ell_0$-norm attacks}
\label{l0_exp}
This section is organized as follows:  first we examine CRD against the One Pixel Attack (OPA) \cite{su2019one} for CIFAR-10. We only test the attack on CIFAR-10 as it is most effective against natural images and does not work well on MNIST or FASHION-MNIST. We note that this attack satisfies the theoretical constraints for $t$ provided  \eqref{main_result_IHT eq2} and \eqref{main_result_IHT eq3}, hence allowing us to test how well CRD works within existing guarantees. Once we establish the  effectiveness of CRD against OPA, we then test it against two other $\ell_0$-norm bounded attacks: Carlini and Wagner (CW) $\ell_0$-norm attack \cite{carlini2017towards} and the Jacobian based Saliency Map Attack (JSMA) \cite{papernot2016limitations}. Each experiment is conducted on a set of 1000 points sampled uniformly at random from the test set of the respective dataset.

\subsubsection{One Pixel Attack}
\label{opa}
We first resize all CIFAR-10 images to $125 \times 125 \times 3$ while maintaining aspect ratios to ensure that the data falls under the hypotheses of  \eqref{main_result_IHT eq2} and \eqref{main_result_IHT eq3} even for large values of $k$.  The OPA attack perturbs exactly one pixel of the image, leading to an $\ell_0$ noise budget of $t = 3$ per image. The $\ell_0$ noise budget of $t=3$ per image allows us to use $k=275$ per channel. Table \ref{tab:op_recovery} shows that OPA is very effective against natural images and forces the network to misclassify all previously correctly classified inputs. Figure \ref{fig:adv_l0_op} shows that adversarial images created using OPA are visually almost indistinguishable from the original images. We test the performance of CRD in two ways: a) reconstruction quality b) network performance on reconstructed images. 

\begin{figure*}[thb]
\begin{center}
\begin{tabular}{c c c}
Original & {\includegraphics[valign=m,width = 2.5in, height=.2in]{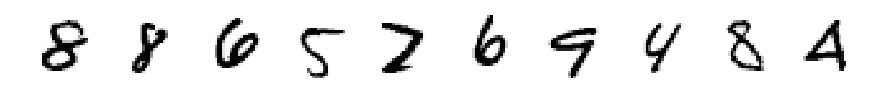}} & 
{\includegraphics[valign=m,width = 2.5in, height=.2in]{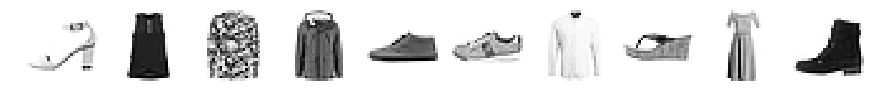}}  \\
CW $\ell_0$ &{\includegraphics[valign=m,width = 2.5in, height=.2in]{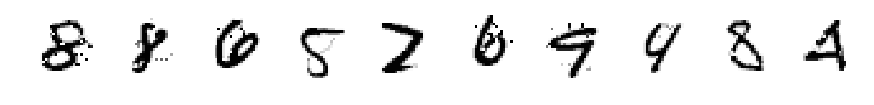}} &
{\includegraphics[valign=m,width = 2.5in]{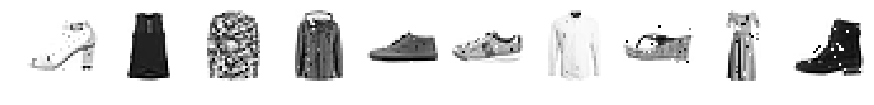}}  \\
JSMA &{\includegraphics[valign=m,width = 2.5in, height=.2in]{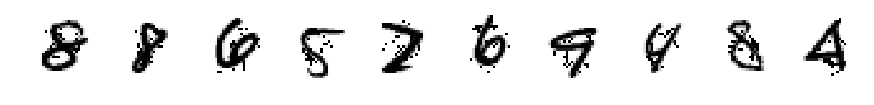}} &
{\includegraphics[valign=m,width = 2.5in, height=.2in]{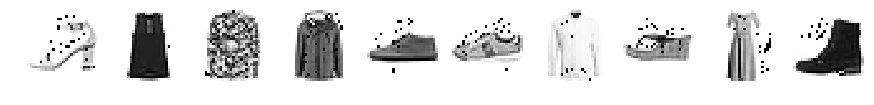}} \\
\end{tabular}
\end{center}
\caption{Adversarial images for MNIST and Fashion-MNIST datasets for $\ell_0$-norm bounded attacks. The first row lists the original images, while the second row shows  adversarial images created using the CW $\ell_0$-norm attack and the third row shows adversarial images created using the JSMA attack.}
\label{fig:adv_l0}
\end{figure*}

In order to analyse the reconstruction quality of Algorithm \ref{IHT}, we do the following: for each test image, we use  OPA to perturb the image and then use Algorithm \ref{IHT} to approximate its  largest (in absolute value) $k=275$ DCT co-efficients. We then perform the IDCT on these recovered co-efficients to generate reconstructed images. The reconstructed images from Algorithm \ref{IHT} can be seen in the second row of Figure \ref{fig:adv_l0_op_recon}. 

Noting that Algorithm \ref{IHT} leads to high quality reconstruction, we now test whether network accuracy improves on these reconstructed images. To do so, we feed these reconstructed images as input to the network and report its accuracy in Table \ref{tab:op_recovery}. We note that network performance does indeed improve as network accuracy goes from $0.0\%$ to $71.8\%$ using Algorithm \ref{IHT}. Therefore, we conclude that CRD provides a substantial improvement in accuracy in against OPA.

\subsubsection{CW-$\ell_0$ Attack and JSMA}
\label{l0_attack}
Having established the effectiveness of CRD against OPA, we move onto the CW $\ell_0$-norm attack and JSMA. We note that even when $t$ is much larger than the hypotheses of Theorem \ref{main_result_IHT} and Theorem \ref{main_result_BP_l0}, we find that Algorithms \ref{IHT} and \ref{BP} are still able to defend the network. We hypothesize that this maybe related to the behavior of the RIP of a matrix for ``most" \footnote{Recall that the results of Theorems \ref{main_result_IHT} and \ref{main_result_BP_l0} hold for all vectors in $\C^{n}$, while for the vectors we considered (CIFAR-10, MNIST, Fashion-MNIST), the recovery error in Section \ref{sub_l0} was well below the guarantees.}  vectors as opposed to the RIP for all vectors, and leave a more rigorous analysis for a follow up work.

To begin our analysis, we show adversarial images for MNIST and Fashion-MNIST created by CW-$\ell_0$ and JSMA in Figure \ref{fig:adv_l0}. The first row contains the original test images while the second and the third rows show the adversarial images. We show adversarial images for the CIFAR-10 dataset in Figure \ref{fig:adv_l0_cifar}.

\begin{figure*}[thb]
\begin{center}
\begin{tabular}{c c }
Oroginal & {\includegraphics [valign=m,width = 4.5in, height=.3in]{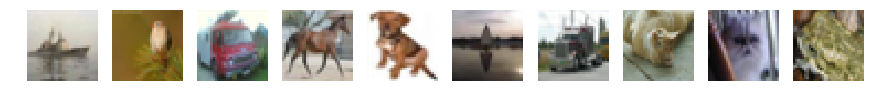}} \\
CW-$\ell_0$ & {\includegraphics[valign=m,width = 4.5in, height=.3in]{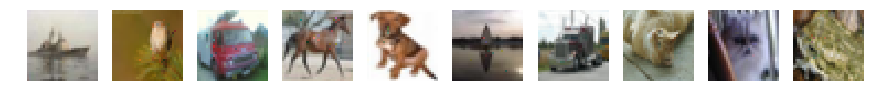} }\\
JSMA & {\includegraphics[valign=m,width = 4.5in, height=.3in]{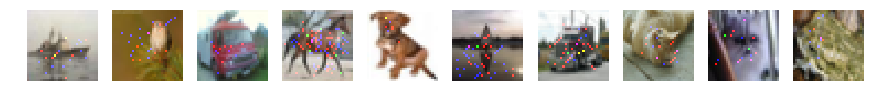}}
\end{tabular}
\end{center}
\caption{Adversarial images for the CIFAR-10 dataset showing $\ell_0$-norm attacks. The first row contains the original images, the second row shows images created using the CW $\ell_0$-norm attack and the third row shows images created using the JSMA attack.}
\label{fig:adv_l0_cifar}
\end{figure*}

\begin{figure*}[thb]
\begin{center}
\begin{tabular}{c c c}
Original & {\includegraphics[valign=m,width = 2.5in, height=.2in]{images/reg-mnist_l0.png}} & 
{\includegraphics[valign=m,width = 2.5in, height=.2in]{images/reg-fmnist_l0.png}}  \\
CW $\ell_0$ Rec. &{\includegraphics[valign=m,width = 2.5in, height=.2in]{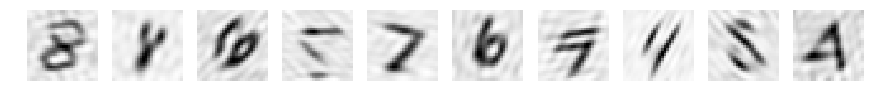}} &
{\includegraphics[valign=m,width = 2.5in]{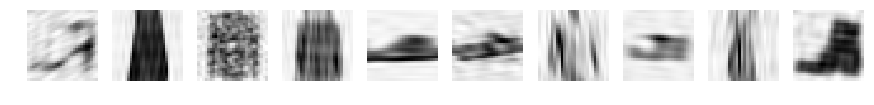}}  \\
JSMA Rec. &{\includegraphics[valign=m,width = 2.5in, height=.2in]{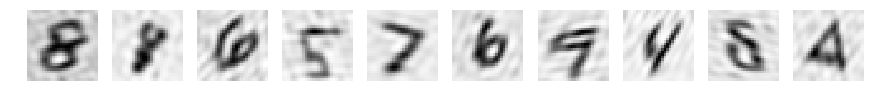}} &
{\includegraphics[valign=m,width = 2.5in, height=.2in]{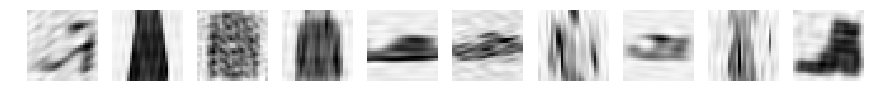}} \\
\end{tabular}
\end{center}
\caption{Reconstruction from adversarial images using Algorithm \ref{IHT}. The first row shows the original images while the second and third rows show the reconstruction of the adversarial images after recovering the largest $40$ co-efficients using Algorithm \ref{IHT}.}
\label{fig:adv_l0_recon}
\end{figure*}

\begin{figure*}[thb]
\begin{center}
\begin{tabular}{c c c}
Original & {\includegraphics[valign=m,width = 2.5in, height=.2in]{images/reg-mnist_l0.png}} & 
{\includegraphics[valign=m,width = 2.5in, height=.2in]{images/reg-fmnist_l0.png}}  \\
CW $\ell_0$ Rec. &{\includegraphics[valign=m,width = 2.5in, height=.2in]{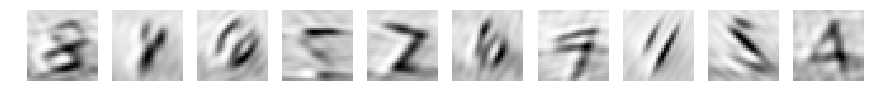}} &
{\includegraphics[valign=m,width = 2.5in]{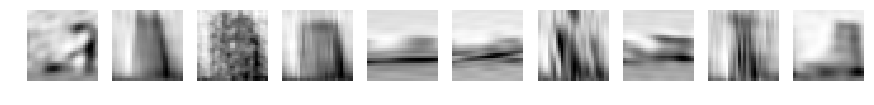}}  \\
JSMA Rec. &{\includegraphics[valign=m,width = 2.5in, height=.2in]{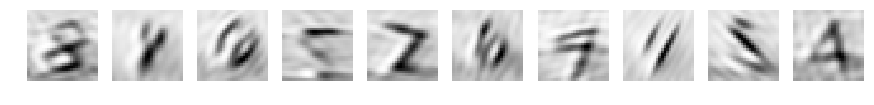}} &
{\includegraphics[valign=m,width = 2.5in, height=.2in]{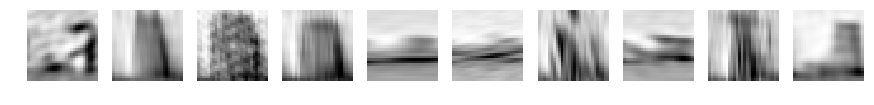}} \\
\end{tabular}
\end{center}
\caption{Reconstruction quality of Algorithm \ref{BP}. The first row shows the original images while the second and third rows show the reconstruction of the adversarial images after recovering the largest $40$ co-efficients using Algorithm \ref{BP}.}
\label{fig:adv_l0_recon_bp_mnist}
\end{figure*}
\begin{figure*}[thb]
\begin{center}
\begin{tabular}{c c}
Original & {\includegraphics [valign=m,width = 4.5in, height=.3in]{images/reg-cifar10_l0.png}} \\
CW $\ell_0$ Rec. & {\includegraphics[valign=m,width = 4.5in, height=.3in]{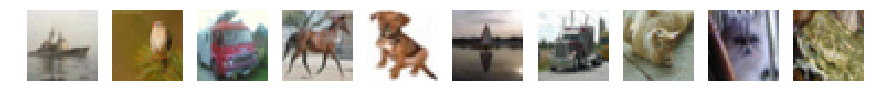} }\\
JSMA Rec.& {\includegraphics[valign=m,width = 4.5in, height=.3in]{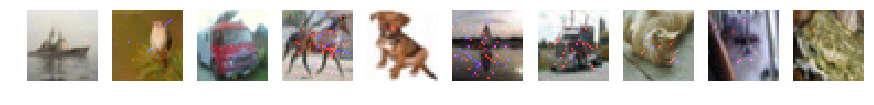} }
\end{tabular}
\end{center}
\caption{Reconstruction quality of Algorithm \ref{IHT} for CIFAR-10. The first row shows the original images while the second and the third rows show reconstructions of the CW-$\ell_0$ and JSMA adversarial images after recovering the largest $500$ co-efficients via Algorithm \ref{IHT} }
\label{fig:adv_l0_recon_iht_cifar}
\end{figure*}

\begin{figure*}[thb]
\begin{center}
\begin{tabular}{c c}
Original & {\includegraphics [valign=m,width = 4.5in, height=.3in]{images/reg-cifar10_l0.png}} \\
CW $\ell_0$ Rec. & {\includegraphics[valign=m,width = 4.5in, height=.3in]{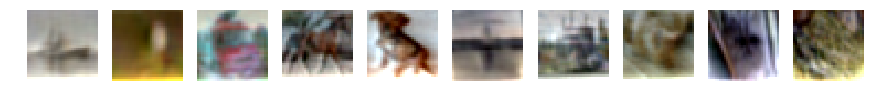} }\\
JSMA Rec. & {\includegraphics[valign=m,width = 4.5in, height=.3in]{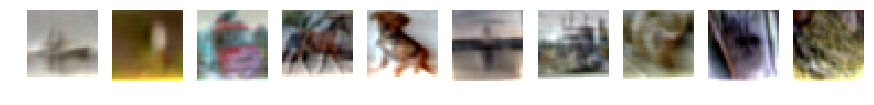} }
\end{tabular}
\end{center}
\caption{Reconstruction quality of Algorithm \ref{BP} on CIFAR-10. The first row shows the  original images while the second and the third rows show reconstructions of the CW-$\ell_0$ and JSMA adversarial images after recovering all 1024 co-efficients using Algorithm \ref{BP}. }
\label{fig:adv_l0_recon_bp_cifar}
\end{figure*}

\begin{table*}[t]
\begin{center}
\begin{tabular}{|c|c|c|c|c|c|c|c|c|c|}
\hline
{\bf Dataset} & {\bf Orig.}  & \multicolumn{4}{|c|}{\bf C\&W $\ell_0$} &  \multicolumn{4}{|c|}{\bf JSMA} \\
&Acc.& $t_{\text{avg}}$ & Acc. & IHT Acc. & BP Acc. & $t_{\text{avg}}$ & Acc. & IHT Acc. & BP Acc.\\
\hline
CIFAR-10 &  84.9\%  & 18 & 8.7\% & 83.0\% &67.0\% &34 & 2.7\% & 63.2\% &67.3\% \\
\hline
MNIST &  98.8\%  & 15 & 0.9\% & 84.2\%& 55.9\% &17 & 56.5 \% & 90.1\%&67.4\% \\
\hline
F-MNIST   & 91.8\%  & 16 & 5.27\% & 84.1\%&71.4\% & 17 & 62.6 \% & 83.3\%&72.0\% \\
\hline
\end{tabular}
\end{center}
\caption{Network performance on the original inputs, adversarial inputs and the inputs corrected using CRD. Here the $t_{\text{avg}}$ column lists the average adversarial budget for each attack, Orig. Acc. column lists the accuracy of the network on the original inputs, the Acc. columns shows the accuracy on adversarial inputs, the IHT Acc. and the BP Acc. columns list the accuracy of the network on inputs that have been corrected using Algorithm \ref{IHT} and Algorithm \ref{BP} respectively.}
\label{tab:l0_norm_table}
\end{table*}
\begin{figure*}[thb]
\begin{center}
\begin{tabular}{c c}
Original & {\includegraphics [valign=m,width = 4.5in, height=.3in]{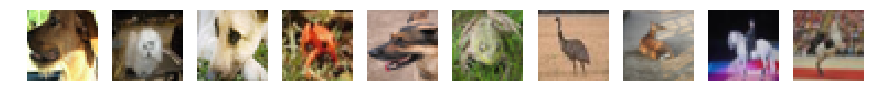}} \\
CW $\ell_2$ & {\includegraphics[valign=m,width = 4.5in, height=.3in]{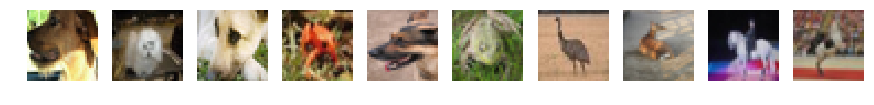} }\\
DF & {\includegraphics[valign=m,width = 4.5in, height=.3in]{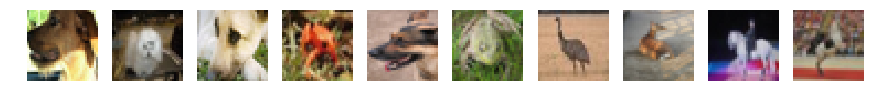} }
\end{tabular}
\end{center}
\caption{Adversarial images for the CIFAR-10 dataset showing $\ell_2$-norm attacks. The first row contains the original images, the second row shows images created using the CW $\ell_2$-norm attack and the third row shows images created using the DeepFool attack. }
\label{fig:adv_l2_cifar}
\end{figure*}
Next, we  follow the procedure described in Section \ref{opa} to analyze the quality of reconstructions for Algorithm \ref{IHT} and Algorithm \ref{BP}. For MNIST and Fashion-MNIST, we show the reconstructions of Algorithm \ref{IHT} in Figure \ref{fig:adv_l0_recon} and for Algorithm \ref{BP} in Figure \ref{fig:adv_l0_recon_bp_mnist}. For CIFAR-10, we show the reconstructions for Algorithm \ref{IHT} in Figure \ref{fig:adv_l0_recon_iht_cifar} and for Algorithm \ref{BP} in Figure \ref{fig:adv_l0_recon_bp_cifar}. In each case it can be seen that both algorithms provide high quality reconstructions for values of $t$ that are well outside the hypotheses required by Theorem \ref{main_result_IHT} and Theorem \ref{main_result_BP_l0}.  We report these $t$ values and the improvement in network performance on reconstructed adversarial images using CRD in Table \ref{tab:l0_norm_table}.

\begin{table*}[t]
\begin{center}
\begin{tabular}{|c|c|c|c|c|c|c|c|}
\hline
{\bf Dataset} & {\bf Orig.}  & \multicolumn{3}{|c|}{\bf C\&W $\ell_2$} &  \multicolumn{3}{|c|}{\bf Deepfool} \\
&Acc.&  $\ell_{2_{\text{avg}}}$ & Acc. & BP Acc. & $\ell_{2_{\text{avg}}}$ & Acc. & BP Acc. \\
\hline
CIFAR-10 &  84.9\%  & 0.12 & 8.7\% & 72.3\% &0.11 & 7.7\% & 71.6\% \\
\hline
MNIST &  99.17\%  & 1.35 & 0.9\% & 92.4\% &1.72 &1.1 \% & 90.7\% \\
\hline
Fashion-MNIST   &  90.3\%  & 0.61& 5.4\% & 78.3\% & 0.63& 5.5 \% & 76.4\% \\
\hline
\end{tabular}
\end{center}
\caption{Accuracy of our network on the original inputs, adversarial inputs and the inputs corrected using CRD. Here the $\ell_{2_\text{avg}}$ column lists the average $\ell_2$-norm of the attack vector, Acc. columns list the accuracy of the network on the original and adversarial inputs, and the BP Acc. columns lists the accuracy of the network on inputs reconstructed using Algorithm \ref{BP}.}
\label{tab:l2_norm_table}
\end{table*}

\subsection{Defense against $\ell_2$-norm attacks}
\label{l2_exp}
In the case of $\ell_2$-norm bounded attacks, we use the CW $\ell_2$-norm attack \cite{carlini2017towards} and the Deepfool attack \cite{moosavi2016deepfool} as they have been shown to be the most powerful. We note that Theorem \ref{main_result_BP_l2} does not impose any restrictions on $k$ or $t$ and therefore the guarantees of equations \eqref{main_result_BP_l2_eq1} and \eqref{main_result_BP_l2_eq2} are applicable for recovery in all experiments of this section. Figure \ref{fig:adv_l2_cifar} shows examples of each attack for the CIFAR-10 dataset while adversarial images for MNIST and Fashion-MNIST are presented in Figure \ref{fig:adv_l2}.

\begin{table*}[t]
\begin{center}
\begin{tabular}{|c|c|c|c|c|}
\hline
{\bf Dataset} & {\bf Orig.}  & \multicolumn{3}{|c|}{\bf BIM} \\
&Acc.&  $\ell_{\infty_{\text{avg}}}$ & Acc. & DS Acc.  \\
\hline
CIFAR-10 &  84.9\%  & 0.015 & 7.4\% & 49.4\%  \\
\hline
MNIST &  99.17\%  & 0.15 & 4.9\% &74.7\% \\
\hline
Fashion-MNIST   &  90.3\%  & 0.15 & 5.3\% & 57.5\%  \\
\hline
\end{tabular}
\end{center}
\caption{Defense against $\ell_\infty$ attacks using CRD. Here $\ell_{\infty_\text{avg}}$ column lists the  $\ell_\infty$-norm of each attack vector, Acc. columns list the accuracy of the network on the original and adversarial inputs, and the DS Acc. columns lists the accuracy of the network on inputs reconstructed using Algorithm \ref{DS}.}
\label{tab:li_norm_table}
\end{table*}

\begin{figure*}[thb]
\begin{tabular}{c c c}
Original & {\includegraphics[valign=m,width = 2.5in, height=.2in]{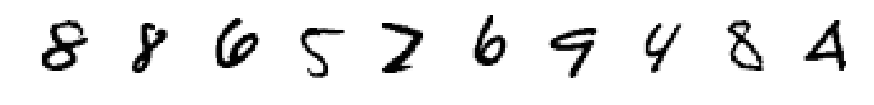}} & 
{\includegraphics[valign=m,width = 2.5in, height=.2in]{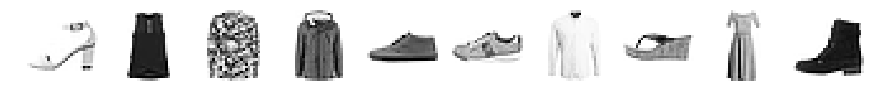}}  \\
CW $\ell_2$ &{\includegraphics[valign=m,width = 2.5in, height=.2in]{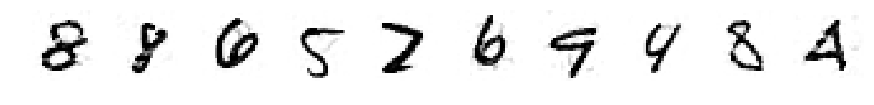}} &
{\includegraphics[valign=m,width = 2.5in]{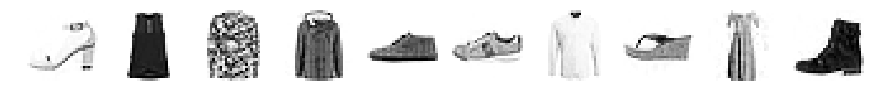}}  \\
Deepfool &{\includegraphics[valign=m,width = 2.5in, height=.2in]{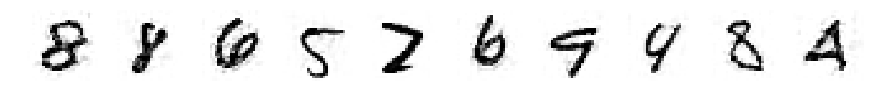}} &
{\includegraphics[valign=m,width = 2.5in, height=.2in]{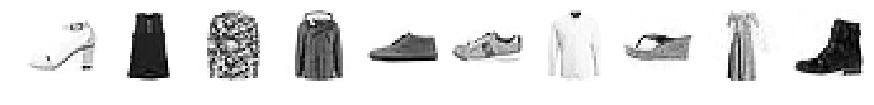}} \\

\end{tabular}
\caption{Adversarial images for MNIST and Fashion-MNIST datasets for $\ell_2$-norm bounded attacks. The first row lists the original images for the MNIST and Fashion MNIST dataset. The second row shows  adversarial images created using the CW $\ell_2$-norm attack and the third row shows adversarial images created using the Deepfool attack.}
\label{fig:adv_l2}
\end{figure*}

\begin{figure*}[thb]
\begin{center}
\begin{tabular}{c c c}
Original & {\includegraphics[valign=m,width = 2.5in, height=.2in]{images/reg-mnist_l2.png}} & 
{\includegraphics[valign=m,width = 2.5in, height=.2in]{images/reg-fmnist_l2.png}}  \\
CW $\ell_2$ &{\includegraphics[valign=m,width = 2.5in, height=.2in]{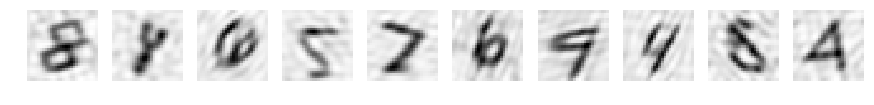}} &
{\includegraphics[valign=m,width = 2.5in]{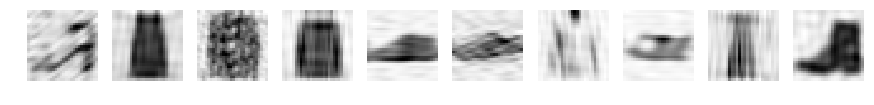}}  \\
Deepfool &{\includegraphics[valign=m,width = 2.5in, height=.2in]{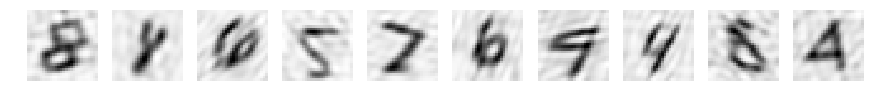}} &
{\includegraphics[valign=m,width = 2.5in, height=.2in]{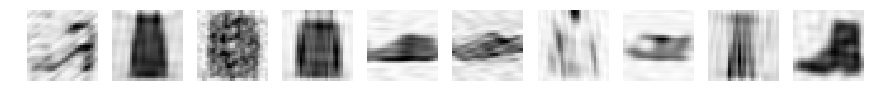}} \\
\end{tabular}
\end{center}
\caption{Reconstruction from adversarial images using Algorithm \ref{BP}. The first row shows the original images while the second and the third rows show the reconstruction of the adversarial images after recovering the largest $40$ co-efficients using Algorithm \ref{BP}.}
\label{fig:adv_l2_recon}
\end{figure*}

The reconstruction quality for MNIST and Fashion-MNIST is shown in Figure \ref{fig:adv_l2_recon} and for CIFAR-10 we show the reconstruction quality in Figure \ref{fig:adv_l2_cifar_recon_bp}.  It can be noted that reconstruction using Algorithm \ref{BP} is of high quality for all three datasets. In order to check whether this high quality reconstruction also leads to improved performance in network accuracy, we test each network on reconstructed images using Algorithm \ref{BP}. We report the results in Table \ref{tab:l2_norm_table} and note that Algorithm \ref{BP} provides a substantial improvement in network accuracy for each dataset and each attack method used.

\begin{figure*}[thb]
\begin{center}
\begin{tabular}{c c}
Original & {\includegraphics [valign=m,width = 4.5in, height=.3in]{images/reg-cifar10_l2.png}} \\
CW $\ell_2$ Rec. & {\includegraphics[valign=m,width = 4.5in, height=.3in]{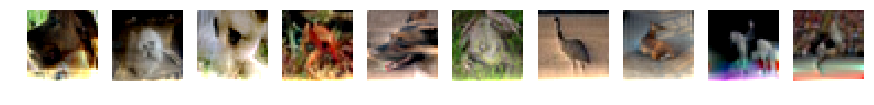} }\\
DF Rec. & {\includegraphics[valign=m,width = 4.5in, height=.3in]{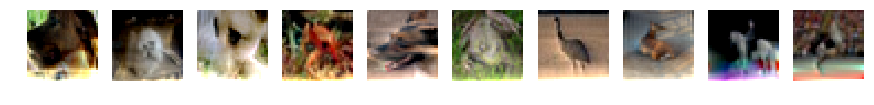} }
\end{tabular}
\end{center}
\caption{Reconstruction quality of Algorithm \ref{BP}. The first row shows the original images, the second row shows reconstruction of CW-$\ell_2$ and DeepFool images using $1024$ co-efficients recovered via Algorithm \ref{BP}. }
\label{fig:adv_l2_cifar_recon_bp}
\end{figure*}

\subsection{Defense against $\ell_\infty$-norm attacks}
\label{li_exp}
For $\ell_\infty$-norm bounded attacks, we use the BIM attack \cite{kurakin2016adversarial} as it is has been shown to be very effective and also allows one to control the $\ell_\infty$-norm of the attack vector explicitly. We note that while the CW $\ell_\infty$-norm attack \cite{carlini2017towards} has the ability to create attack vectors with $\ell_\infty$-norm less than or equal to BIM, it is computationally expensive and also does not allow one to pre-specify a value for the $\ell_\infty$-norm of an attack vector. Therefore, we limit our experimental analysis to the BIM attack.  Note that for any attack vector $e$,  $\|e\|_2 \leq \sqrt{n}\|e\|_\infty$ hence allowing $\ell_\infty$-norm attacks to create attack vectors with large $\ell_2$-norm.  Therefore, we could expect reconstruction quality and network accuracy to be lower when compared to $\ell_2$-norm attacks. Figure \ref{fig:adv_li} shows examples of each attack for the  MNIST and Fashion-MNIST datasets while images for CIFAR-10 are presented in Figure \ref{fig:adv_li_cifar}. We show reconstructions using Algorithm \ref{DS} in Figures \ref{fig:adv_li_recon} and \ref{fig:adv_li_cifar_recon_ds}. Finally, we report the network performance on reconstructed inputs using Algorithm \ref{DS} in Table \ref{tab:li_norm_table}. We note that Algorithm \ref{DS} provides an increase in network performance against reconstructed adversarial inputs. However, the improvement in performance is not as substantial as it was against $\ell_0$ or $\ell_2$-norm attacks. 

\begin{figure*}[thb]
\begin{center}
\begin{tabular}{c c c}
Original & {\includegraphics[valign=m,width = 2.5in, height=.2in]{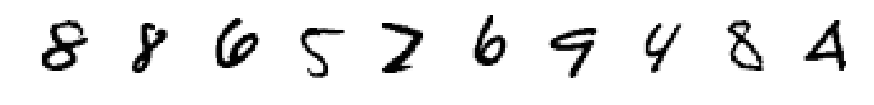}} & 
{\includegraphics[valign=m,width = 2.5in, height=.2in]{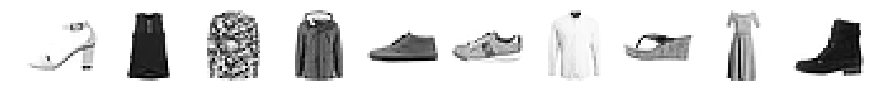}}  \\
BIM &{\includegraphics[valign=m,width = 2.5in, height=.2in]{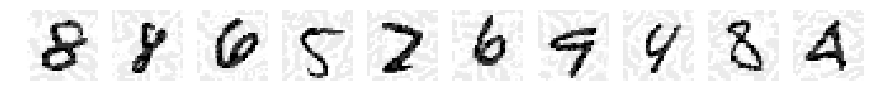}} &
{\includegraphics[valign=m,width = 2.5in]{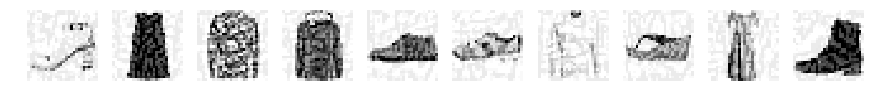}}  \\
\end{tabular}
\end{center}
\caption{Adversarial images for MNIST and Fashion-MNIST datasets for $\ell_\infty$-norm bounded attacks. The first row lists the original images, while the second row shows  adversarial images created using the BIM attack.}
\label{fig:adv_li}
\end{figure*}

\begin{figure*}[thb]
\begin{center}
\begin{tabular}{c c}
Original & {\includegraphics [valign=m,width = 4.5in, height=.3in]{images/reg-cifar10_l2.png}} \\
BIM & {\includegraphics[valign=m,width = 4.5in, height=.3in]{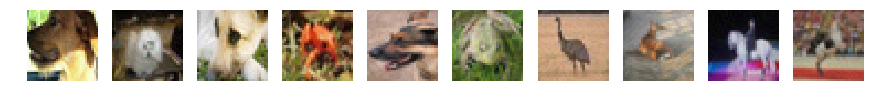} }
\end{tabular}
\end{center}
\caption{Adversarial images for the CIFAR-10 dataset showing the BIM attack. The first row contains the original images, the second row shows images created using the BIM attack.}
\label{fig:adv_li_cifar}
\end{figure*}

\begin{figure*}[thb]
\begin{center}
\begin{tabular}{c c c}
Original & {\includegraphics[valign=m,width = 2.5in, height=.2in]{images/reg-mnist_li.png}} & 
{\includegraphics[valign=m,width = 2.5in, height=.2in]{images/reg-fmnist_li.png}}  \\
BIM &{\includegraphics[valign=m,width = 2.5in, height=.2in]{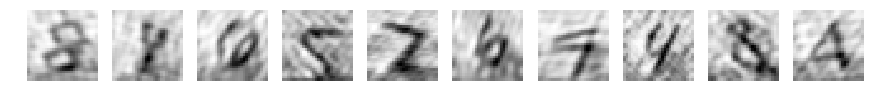}} &
{\includegraphics[valign=m,width = 2.5in]{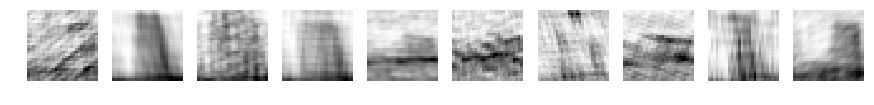}}  \\
\end{tabular}
\end{center}
\caption{Reconstruction from adversarial images using Algorithm \ref{DS}. The first row shows the original images while the second and the third rows show the reconstruction of the adversarial images after recovering the largest $40$ co-efficients using Algorithm \ref{DS}.}
\label{fig:adv_li_recon}
\end{figure*}
\begin{figure*}[thb]
\begin{center}
\begin{tabular}{c c}
Original & {\includegraphics [valign=m,width = 4.5in, height=.3in]{images/reg-cifar10_l2.png}} \\
BIM Rec. & {\includegraphics[valign=m,width = 4.5in, height=.3in]{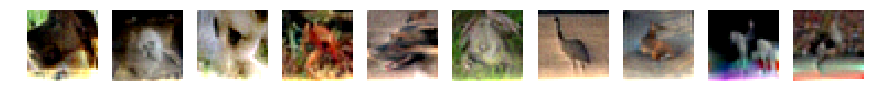} }
\end{tabular}
\end{center}
\caption{Reconstruction quality of Algorithm \ref{DS}. The first row shows the original images, the second row shows reconstruction of BIM images using $1024$ co-efficients recovered via Algorithm \ref{DS}. }
\label{fig:adv_li_cifar_recon_ds}
\end{figure*}

\subsection{Which recovery algorithm to use for $\ell_0$-norm attacks}
As shown in Section \ref{sub_l0}, Algorithm \ref{IHT} and  Algorithm \ref{BP} lead to high quality reconstructions for $\ell_0$-norm bounded attacks. Hence, it is conceivable that CRD using either algorithm should be able to provide a good defense. However, we note that the $\ell_2$-norm recovery error is lower for Algorithm \ref{IHT} as seen in Section \ref{sub_l0}. Therefore, depending on the dataset, Algorithm \ref{IHT}  may lead to better quality reconstructions and hence better network accuracy. From a practical perspective, one may ask which algorithm is faster. Since Algorithm \ref{BP} is not technically an algorithm, its runtime is dependent on the actual method used to solve the optimization problem. For instance, we use Second Order Cone Programming (SOCP) from CVXPY \cite{cvxpy} for solving the minimization problem in Algorithm \ref{BP}. In our experiments, we noticed that the runtime of Algorithm \ref{BP} slows considerably for larger values of $n$. However, Algorithm \ref{IHT} does not face this issue (there is a slowdown but it is much smaller than Algorithm \ref{BP}). Therefore, if speed is important, it may be beneficial to use Algorithm \ref{IHT} as opposed to Algorithm \ref{BP}.
\clearpage

\section{Conclusion}
\label{conclusion}
We provided recovery guarantees for corrupted signals in the case of $\ell_0$-norm,  $\ell_2$-norm, and $\ell_\infty$-norm bounded  noise. We then experimentally verified these guarantees and showed that for the datasets used, recovery error was considerably lower than the upper bounds of our theorems. We were able to utilize these observations in CRD and improve the performance of neural networks substantially in the case of $\ell_0$-norm, $\ell_2$-norm and $\ell_\infty$-norm bounded noise. While $\ell_0$-norm attacks don't necessarily satisfy the constraints required by Theorem \ref{main_result_IHT} and Theorem \ref{main_result_BP_l0}, we showed that CRD is still able to provide a good defense for values of $t$ much larger than allowed in the guarantees. The guarantees of  Theorem \ref{main_result_BP_l2} and Theorem \ref{main_result_BP_li} were applicable in all experiments and CRD was shown to improve network performance for all attacks.

\newpage

\bibliographystyle{plain}
\clearpage
\bibliography{bibliography}

\end{document}